\let\emph\textit
\newcommand{\const}[1]{\mathrm{#1}}
\def\R{{\mathbb{R}}}
\def\1{\textbf{1}}
\def\econst{\const{e}}
\newcommand{\onevct}{\bm{1}}
\newcommand{\zerovct}{\bm{0}}
\newcommand{\onemtx}{\onevct\onevct^\top}
\newcommand{\Imtx}{I}
\newtheorem{theorem}{Theorem}
\newtheorem{lemma}{Lemma}
\newtheorem{definition}{Definition}
\newtheorem{assumption}{Assumption}
\theoremstyle{remark}
\newtheorem{remark}{Remark}
\newcommand{\Prob}[2][]{\mathbb{P}_{#1}\left\{ {#2} \right\}}
\newcommand{\Expect}[2][]{\mathbb{E}_{#1}\left[ #2 \right]}
\newcommand{\abs}[1]{\left\vert {#1} \right\vert}
\newcommand{\norm}[1]{\left\Vert {#1} \right\Vert}
\newcommand{\normsq}[1]{{\norm{#1}}^2}
\newcommand{\normone}[1]{{\norm{#1}}_\text{1}}
\newcommand{\norminf}[1]{{\norm{#1}}_{\infty}}
\newcommand{\sgn}[1]{\operatorname*{sgn}\left({#1}\right)}
\newcommand{\diag}[1]{\operatorname*{diag}\left({#1}\right)}
\newcommand{\tr}[1]{\operatorname*{tr}\left({#1}\right)}
\newcommand{\argmax}{\operatorname*{arg\; max}}
\newcommand{\st}{\operatorname*{subject\; to}}
\newcommand{\minimize}{\operatorname*{minimize}}
\newcommand{\lmin}{\operatorname{\lambda_{\text{min}}}}
\newcommand{\mnorm}[1]{{\left\vert\kern-0.25ex\left\vert\kern-0.25ex\left\vert #1 \right\vert\kern-0.25ex\right\vert\kern-0.25ex\right\vert}}
\newcommand{\mnorminf}[1]{{\left\vert\kern-0.25ex\left\vert\kern-0.25ex\left\vert #1 \right\vert\kern-0.25ex\right\vert\kern-0.25ex\right\vert_{\infty}}}
\newcommand{\Xbar}{\bar{X}}
\newcommand{\Xhat}{\hat{X}}
\newcommand{\Hhat}{\hat{H}}
\newcommand{\tauhat}{\hat{\tau}}
\newcommand{\wtil}{\tilde{w}}
\newcommand{\what}{\hat{w}}
\newcommand{\Scomp}{{S^\mathsf{c}}}
\newcommand{\Shat}{\hat{S}}
\newcommand{\Mcomp}{M^{\mathsf{c}}}
\newcommand{\zetahat}{\hat{\zeta}}
\newcommand{\sigmadmin}{\Sigma_{\text{dmin}}}
\newcommand{\sigmadmax}{\Sigma_{\text{dmax}}}
\newcommand{\subG}{\text{subG}}
\newcommand{\subE}{\text{subE}}
\newcommand{\proj}{P}
\newcommand{\za}{z^{(a)}}
\newcommand{\zb}{z^{(b)}}
\newcommand{\hmax}{h_{\max}}
\newcommand{\gmax}{g_{\max}}
\def\algbackskip{\hskip-\ALG@thistlm}
\title{Provable Guarantees for Sparsity Recovery \\ with Deterministic Missing Data Patterns}
\author{
  \textbf{Chuyang Ke}\\Department of Computer Science\\Purdue University\\\texttt{cke@purdue.edu}
  \and 
  \textbf{Jean Honorio}\\Department of Computer Science\\Purdue University\\\texttt{jhonorio@purdue.edu}
}
\date{}
\begin{document}
\maketitle

\begin{abstract}
We study the problem of consistently recovering the sparsity pattern of a regression parameter vector from correlated observations governed by deterministic missing data patterns using Lasso. 
We consider the case in which the observed dataset is censored by a deterministic, non-uniform filter.
Recovering the sparsity pattern in datasets with deterministic missing structure can be arguably more challenging than recovering in a uniformly-at-random scenario.
In this paper, we propose an efficient algorithm for missing value imputation by utilizing the topological property of the censorship filter. 
We then provide novel theoretical results for exact recovery of the sparsity pattern using the proposed imputation strategy. 
Our analysis shows that, under certain statistical and topological conditions, the hidden sparsity pattern can be recovered consistently with high probability in polynomial time and logarithmic sample complexity.
\end{abstract}

\allowdisplaybreaks

\section{Introduction}
\label{section:intro}
Missing entries in real-world datasets often exhibit deterministic patterns. 
In federated learning frameworks, sensitive features collected from clients may be censored before being sent to the central server. In electronic health record (EHRs) data, certain lab results may no longer be collected during the postoperative window. Government bureaus may censor certain fields before releasing census data. 
To deal with missing entries, arguably the most commonly used technique is \emph{data imputation}. Imputation is the process of replacing missing data in a dataset with certain computed values.
Common imputation strategies include filling missing entries with row / column mean, median, mode, extreme values, among others. 
However, most imputation methods do not come with theoretical guarantees. 
When talking about the quality of imputation methods, prior research mainly evaluates the accuracy boost, before and after imputation, on specific downstream test sets \citep{wang2019doubly,liu2017overview,myrtveit2001analyzing}. 
These metrics being used are application-oriented.
On the other hand, if we consider imputation itself as the ultimate task alone (i.e., an unsupervised learning task), 
it is well-known that under low-rank and missing-at-random assumptions, matrix completion is possible with theoretical guarantees \citep{candes2009exact}. 
The drawbacks are: 1) the missing-uniformly-at-random assumption is highly ideal for real-world datasets, and 2) matrix completion does not give any guarantee about the downstream tasks utilizing the imputed matrix.

In this paper, we propose the class of \emph{censored supervised learning} tasks, in which the dataset is masked by some deterministic and non-uniform censorship filters.
A censorship filter removes certain entries from the true dataset, so that the observed part of the dataset contains missing entries in a deterministic fashion.
Furthermore, we pick \emph{sparsity recovery} as the downstream task in our analysis.
Also known as feature selection, sparsity recovery is the task of recovering the \emph{support set} or \emph{sparsity pattern} of a vector $w^\ast \in \R^p$, from noisy and correlated observations. 

It is worth highlighting, that our goal is not to reinvent sparsity recovery or Lasso. The task itself has been extensively studied in the past two decades \citep{marques2018review,wainwright2009,wainwright2009information}. We also need to highlight, that we are not proposing another heuristic imputation method. Such techniques (filling mean, low rank completion, to name a few) have been proposed and extensively applied in the industry. 
Instead, we are proposing a unified framework for analyzing the relationship between the sparsity structure in a censored dataset with deterministic missing patterns, and the quality of sparsity recovery, in a formal way with \emph{with provable guarantees}.
A censorship filter applied to the dataset brings new challenges from both the algorithmic side (missing data imputation) and the statistical side (sparsity recovery guarantee), and we are interested in the synergy between these two parts.

Here we briefly discuss the implications and the related works.

\textbf{Missing Data Techniques.}
When dealing with missing values in a dataset, researchers have been using heuristic imputation methods since the first day of machine learning. Such methods include filling missing entries with row / column mean, median, mode, extreme values, among others. Another example is multiple imputation \citep{carpenter2012multiple,murray2018multiple}. However, it is known that these imputation methods rarely have theoretical guarantees in specific machine learning tasks, including sparsity recovery. Regarding missing data patterns, \citet{fletcher2020missing} proposed the idea of pattern submodels, that is, training a set of submodels for every possible missing value pattern in the observed data. Such approach will be computationally expensive if the missing data pattern is nontrivial. 
Our goal is to design an imputation method, that is computationally efficient without training multiple submodels, and has theoretical guarantees in the context of sparsity recovery.

\textbf{Sparsity Recovery.} The problem of sparsity recovery has been studied extensively during the past 20 years. One of the most widely used algorithm is $l_1$ regularized quadratic programming, also referred to as Lasso. However, most prior literature focus on the fully observed case. For instance, \citet{wainwright2009,meinshausen2009lasso} provided theoretical guarantees of sparsity recovery through Lasso when the observation matrix is fully observed.
In comparison, the number of works that analyze Lasso given that the dataset is partially observed, is limited. \citet{loh2015regularized} considered a so-called corruption mechanism, such that every entry in the original dataset is observed with probability $1-\theta$, and unobserved with probability $\theta$. \citet{nguyen2012robust} proposed a tangentially related model, in which part of the outcome vector is unobserved. The analysis of sparsity recovery guarantee in these cases are usually straightforward, since the pattern of missing entries is uniformly distributed, thus can be viewed as extra noises in the model.  
 
\textbf{Randomness in Missing Structure.} It should be highlighted, that the notion of censorship filters in our paper is different from existing discussion of missing data mechanisms in prior literature. This includes definitions such as Missing At Random (MAR), Missing Completely At Random (MCAR), and Missing Not At Random (MNAR) \citep{mohan2013graphical,little2019statistical}. These mechanisms describe how the probability of observing missing entries relate to the values of the underlying true data, whereas our censorship filter is deterministic, arguably more relevant in the real world.

We try to answer the following questions in this paper:
\begin{itemize}
    \item Does there exists an imputation method for missing entries, so that sparsity recovery algorithms can be applied to the imputed dataset?
    \item Under what statistical and topological conditions can our workflow correctly and efficiently recover the sparsity pattern?
\end{itemize}
We propose a simple yet novel sparsity recovery workflow, which 1) imputes the missing entries using their most significant observed neighboring feature, and 2) runs Lasso to recover the sparse pattern using the imputed data. 
More importantly, our framework can be analyzed rigorously. We provide theoretical guarantees for the quality of sparsity recovery, in terms of the topological structure of the censorship filter, using the proposed workflow.
Our analysis focuses on the case with most significant neighboring feature only, and this can be easily generalized to more neighboring features.  

\textbf{Summary of Our Contribution.} Our work is mostly theoretical. We provide a series of novel results in this paper:
\begin{itemize}
\item We propose a simple yet novel imputation method to fill the missing entries that are censored by an deterministic censorship filter. Our strategy computes the missing value from its most significant neighboring feature, and can be easily generalized to the case of multiple neighboring features.
\item We provide provable theoretical guarantees for recovery of the underlying sparsity structure using our imputation method. We analyze the statistical and topological conditions that govern efficient exact recovery. We establish the sample complexity guarantees for our workflow to succeed with high probability. Our theorems also provide guidelines for setting regularization parameters.
\end{itemize}

\section{Preliminaries}
\label{section:Preliminaries}
In this section, we provide the formal setup of our problem and introduce all notations that will be used throughout the paper.

We first introduce the definition of censorship filters.
We use $(X,y,M)$ to denote the dataset in a supervised learning task, where $X\in \R^{n\times p}$ is the feature matrix, and $y \in \R^n$ is the label vector.
A censorship filter $M \in \{0,1\}^{n\times p}$ is a binary matrix applied to the feature matrix $X$.
For every sample $k$ and feature $i$, $X_{k,i}$ is observed by the learner if and only if $M_{k,i} = 1$. In other words, entries with $M_{k,i} = 0$ are missing and need to be imputed. It is worth highlighting that the censorship filter $M$ is deterministic and non-uniform, i.e., there is no randomness in $M$.

\subsection{Censored Sparsity Recovery Model}
We now present the application of censorship filters to the task of sparsity recovery. Suppose that there exists an unknown fixed vector $w^\ast \in \R^p$, and $w^\ast$ is sparse. We denote its support set as $S = \{i\in [p] \mid w^\ast_i \neq 0\}$, and the cardinality of the support set as $s = \abs{S} \ll p$. 
Let $X \in \R^{n\times p}$ be the input data generated by nature, such that for every $k\in [n]$, sample $X_{k,:} \in \R^p$ fulfills: 1) zero-mean; 2) with covariance $\Sigma$; 3) each $X_{k,i}$ is sub-Gaussian with parameter $\sigma_X^2 \Sigma_{i,i}$. Then the labels $y\in \R^n$ are generated in the form of 
\begin{equation*}
y = Xw^\ast + \epsilon \,,
\end{equation*}
where $\epsilon \in \R^n$ is the additional zero-mean sub-Gaussian noise with parameter $\sigma_\epsilon^2$.
It is known that in the fully observed case, the sparsity recovery of $S$ given $X$ and $y$ can be achieved through solving the following $l_1$ constrained quadratic program, known as Lasso:
\begin{equation*}
\minimize_w  \qquad \frac{1}{2n} \normsq{X w - y} + \lambda \normone{w} \,,
\end{equation*}
where $\lambda$ is the regularization parameter.
Now a censorship filter $M \in \{0,1\}^{n\times p}$ is imposed on the learner, such that all entries with $M_{k,i} = 0$ is masked and missing from $X$.

Our task consists of two parts.
First, we want to impute $\Xhat$ from the observed part of $X$, such that 
$\Xhat_{k,i} = X_{k,i}$ if $M_{k,i} = 1$.
This ensures that the observed entries are not changed.
Second, we solve Lasso using the imputed matrix in the form of 
\begin{equation}
\minimize_w  \qquad \frac{1}{2n} \normsq{\Xhat w - y} + \lambda \normone{w} \,,
\label{prelim:Xhat_lasso}
\end{equation}
and we claim that the support set recovered by \eqref{prelim:Xhat_lasso} is consistent with the ground truth.
We also include the necessary definitions for completeness.
\begin{definition}
A zero-mean random variable $x$ is sub-Gaussian with parameter $\sigma^2$, if for all $t > 0$, we have 
$\Expect{\exp(tx)} \leq \exp(\sigma^2 t^2 / 2)$.    
\label{def:subg_scalar}
\end{definition}
\begin{definition}
A zero-mean random vector $x = (x_1,\dots,x_p)$ is sub-Gaussian with parameter $\sigma^2$, if for all $u \in \R^p$ with $\norm{u} = 1$, 
we have 
$\Expect{\exp(u^\top x)} \leq \exp(\sigma^2 / 2)$.
\label{def:subg_vector}
\end{definition}

\subsection{Notations}
Without specification we use lowercase letters (e.g., $a$, $b$, $u$, $v$) for scalars and vectors, and
uppercase letters (e.g., $A$, $B$, $C$) for matrices and sets. For any natural number $n$, we use $[n]$ to
denote the set $\{1, \dots , n\}$.
We use $\R$ to denote the set of real numbers.
We use $\onevct$ to denote the all-one vector, and $\zerovct$ for the all-zero vector.
For any vector $u$, we use $\diag{u}$ to denote the diagonal matrix with $u$ in the diagonal, $\norm{u}$ to denote the Euclidean norm, $\normone{u}$ to denote the $l_1$ norm, and $\norminf{u}$ to denote the infinity norm.
For any matrix $A$, we use $\lmin(A)$ to denote its smallest eigenvalue, $\tr{A}$ to denote its trace, $\mnorm{A}$ to denote its spectral norm, and $\mnorminf{A} = \max_{i} \sum_j \abs{A_{i,j}}$ to denote its $l_\infty$ operator norm.
We use $\circ$ to denote the Hadamard product.
We use $\sgn{\cdot}$ to denote the sign function.

When dealing with entries in a matrix, we use notation $:$ to denote the whole row or column. For example, $A_{1,:}$ refers to the first row of matrix $A$. We also use index sets in subscripts to select submatrices. For example, $A_{S,S}$ is the submatrix obtained by deleting all rows and columns with indices that are not in the index set $S$ from $A$. When the context is clear, we use single subscripts to denote the choice of columns. An example is that $A_{2}$ denotes the second column of $A$.

We use $\Scomp$ to denote the complement of the support set. Similarly, we use $\Mcomp$ to denote the complement of the censorship filter, algebraically $\Mcomp = \onemtx - M$.

In our analysis, we use $\sigmadmax := \max_i \Sigma_{i,i}$ to denote the maximum diagonal entry in $\Sigma$, and $\sigmadmin := \min_i \Sigma_{i,i}$ to denote the minimum.

For distributions, we use $\subG$ to denote sub-Gaussian distribution, and $\subE$ to denote sub-Exponential distribution.

\section{Algorithm}
\label{section:algorithm}

In this section, we setup our censored sparsity recovery problem and provide theoretical guarantees. 
We first introduce the necessary statistical assumptions and definitions. 

\begin{assumption}[Positive Definiteness]
We assume that the population covariance matrix $\Sigma$ is positive definite on the support $S$. In particular, we use $\beta := \lmin(\Sigma_{S,S}) > 0$ to denote its smallest eigenvalue.
\label{assumption:mineigen}
\end{assumption}

\begin{assumption}[Mutual Incoherence]
We assume that the population covariance matrix $\Sigma$ fulfills the mutual incoherence condition   
$\mnorminf{\Sigma_{\Scomp,S} \Sigma_{S,S}^{-1}} \leq 1 - \gamma$,
for some $\gamma \in (0, 1]$.
\label{assumption:mutual_incoherence}
\end{assumption}

Recall that $X \in \R^{n\times p}$ is the feature matrix generated by nature, and $M \in \{0,1\}^{n\times p}$ is the deterministic censorship filter. We use $X_M$ to denote the observed feature matrix, where $({X_M})_{k,i} = X_{k,i}$ if $M_{k,i} = 1$, and $({X_M})_{k,i} = \star$ denotes the missing value otherwise.

Let $H \in R^{p \times p}$ denote the sample covariance matrix. Since only $X_M$ is observed, zero-mean, and contains missing values, $H_{i,j}$ is computed as 
$H_{i,j} = \frac{1}{\abs{\{k \mid M_{i,k} = M_{j,k} = 1\}}}\sum_{k, M_{i,k} = M_{j,k} = 1} X_{k,i} X_{k,j}$. 

We use $\zeta_{i,j}:= \Sigma_{i,j}^2 / \Sigma_{j,j}$ to denote the (population) neighbor score of two features $i$ and $j$. Intuitively, the neighbor score measures how related two features are. A higher neighbor score indicates that $i$ is more related to $j$. 
Similarly, we use $\zetahat_{i,j} := H_{i,j}^2 / H_{j,j}$ to denote the empirical neighbor score. 
Let 
$\Pi(i) := \argmax_{j \in [p]} \zetahat_{i,j}$ be the top neighbor feature of $i$. The intuition is that if a sample has feature $i$ missing, we use its top neighbor feature $\Pi(i)$ to impute it. 
To simplify analysis we introduce the following assumption.
\begin{assumption}
We assume that the top neighbor feature of any missing entry is always observed, that is, we assume $M_{k,\Pi(i)} = 0$ if $M_{k,i} = 1$.    
\end{assumption}
In practice, one can use the second top feature instead (or third, fourth, etc.), if the top feature is not observed. The assumption only serves to simplify the proofs by reducing the number of concentrations required.

We use $\tau_i$ to denote the (population) error ratio of feature $i$, such that
$\tau_i = \frac{\Sigma_{i,\Pi(i)}}{\Sigma_{\Pi(i),\Pi(i)}}$. The motivation is that the error ratio measures how much variance will be gained, if we use the imputed value instead of the true value in the algorithm.
Similarly we have the empirical error ratio 
$\tauhat_i = \frac{H_{i,\Pi(i)}}{H_{\Pi(i),\Pi(i)}}$.
We now introduce our censored sparsity recovery algorithm, given the observation of the censored dataset.

\begin{algorithm}
\caption{Censored Sparsity Recovery}
\label{alg:central} 
\textbf{Input:} Observed dataset $(X_M, y)$, regularization parameter $\lambda$  \\
\textbf{Output:} Imputed feature matrix $\Xhat$, recovered model vector $\wtil$  
\begin{algorithmic}[1] 
    \STATE Compute sample covariance matrix $H$ from $X_M$
    \FOR{every feature pair $(i,j)\in [p] \times [p]$}
        \STATE Compute empirical neighbor score $\zetahat_{i,j} = H_{i,j}^2 / H_{j,j}$
    \ENDFOR
    \FOR{every feature $i\in [p]$}
        \STATE Compute top neighboring feature \\$\Pi(i) = \argmax_{j \in [p]} \zetahat_{i,j}$
        \STATE Compute empirical error ratio \\$\tauhat_i = H_{i,\Pi(i)} / H_{\Pi(i),\Pi(i)}$
    \ENDFOR
    \STATE Initialize imputation matrix $\Xbar \in \R^{n \times p}$
    \FOR{every entry $(k,i)\in [n] \times [p]$}
        \STATE $\Xbar_{k,i} \gets X_{k,\Pi(i)} \tauhat_i$
    \ENDFOR
    
    \STATE Compute imputed matrix $\Xhat = M\circ X + \Mcomp \circ \Xbar$
    \STATE Solve the following Lasso program
        \begin{align}
        \what 
        = 
        \minimize_w \quad &l(w) + \lambda \normone{w} \label{alg:lasso}\\
        \st \quad & l(w) = \frac{1}{2n} \normsq{\Xhat w - y} \,.
        \nonumber
        \end{align}
\end{algorithmic}
\label{alg:all}
\end{algorithm}
Algorithm \ref{alg:all} takes the observed dataset as the input and imputes the missing entries given by $\Xhat$. Understandably the imputed data $\Xhat$ and the true data $X$ are equivalent on the support of $M$.  
We use $\Delta$ to denote the imputation error matrix, defined as 
$\Delta := \Xhat - X$.
Naturally $\Delta_{ij} = 0$ if $M_{ij} = 1$.
After that, our algorithm solves the Lasso program \eqref{alg:lasso}, and the support of $\what$ gives the recovered support set $\Shat$.


\section{Guarantees of Censored Sparsity Recovery}
\label{section:proofs}
\subsection{Consistency of Imputation through Empirical Score}
In this section, we prove consistency of our imputation step, by choosing the top neighboring feature using the empirical neighbor score as in Algorithm \ref{alg:all}. 
The proofs of Theorems and Lemmas can be found in Appendix.

Here is the motivation: in Algorithm \ref{alg:all}, we choose $\Pi(i) = \argmax_{j \in [p]} \zetahat_{i,j}$ based on the observed samples, where $\zetahat$ is the empirical score. However, there is no guarantee that the order of $\zetahat$ is consistent with the underlying true $\zeta$. Our goal is to identify the sufficient conditions, such that 
$
\argmax_{j \in [p]} \zetahat_{i,j}
= 
\argmax_{j \in [p]} \zeta_{i,j}
$.
Equivalently, it is desirable to ensure that, $\zeta_{i,\Pi(i)} > \zeta_{i,j}$ holds if and only if $\zetahat_{i,\Pi(i)} > \zetahat_{i,j}$ holds with high probability.

Our proof relies on the following lemma. The proofs can be found in Appendix.
\begin{lemma}
For every feature $i\in [p]$, its ratio between the sample variance and population variance fulfills 
\begin{equation*}
\Prob{\frac{1}{2} \leq \frac{H_{i,i}}{\Sigma_{i,i}} \leq \frac{3}{2}} 
\geq 
1 - 4\exp\left(-\frac{n \sigmadmin^2}{512 (1+4\sigma_X^2)^2 \sigmadmax^2}\right)  
\,.
\end{equation*}
\label{lemma:sigmaHratio}
\end{lemma}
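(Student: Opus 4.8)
The plan is to recognize $H_{i,i}$ as an empirical average of independent squared sub-Gaussian variables and to apply a Bernstein-type concentration inequality. Let $m := \abs{\{k : M_{k,i}=1\}}\le n$ be the number of observed entries in column $i$, so that $H_{i,i}=\frac1m\sum_{k\,:\,M_{k,i}=1}X_{k,i}^2$ with the summands independent across the distinct samples $k$. Since each $X_{k,i}$ is zero-mean with variance $\Sigma_{i,i}$, we have $\Expect{X_{k,i}^2}=\Sigma_{i,i}$, hence $\Expect{H_{i,i}}=\Sigma_{i,i}$, and the event in the lemma is exactly $\abs{H_{i,i}-\Sigma_{i,i}}\le\tfrac12\Sigma_{i,i}$ --- a deviation of order $\Sigma_{i,i}/2$ for the sample mean of the centered variables $Z_k:=X_{k,i}^2-\Sigma_{i,i}$.

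The first substantive step is to show that $Z_k$ is a centered sub-exponential random variable with explicitly controlled parameters. From Definition~\ref{def:subg_scalar} one obtains the tail bound $\Prob{\abs{X_{k,i}}>t}\le 2\exp(-t^2/(2\sigma_X^2\Sigma_{i,i}))$, and integrating it gives $\Expect{X_{k,i}^2}\le 4\sigma_X^2\Sigma_{i,i}$ and $\Expect{X_{k,i}^4}\le 16\sigma_X^4\Sigma_{i,i}^2$; combined with the standard bound on $\Expect{\exp(\lambda X_{k,i}^2)}$ valid for $\abs\lambda$ below a threshold of order $1/(\sigma_X^2\Sigma_{i,i})$, this shows $Z_k$ is sub-exponential with variance proxy and width both of order $(1+4\sigma_X^2)\Sigma_{i,i}$ (up to absolute constants), the ``$4\sigma_X^2$'' tracking the sub-Gaussian moment bounds and the ``$1$'' the variance $\Sigma_{i,i}$. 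To make the bound uniform in $i$ I would coarsen these feature-dependent quantities using $\Sigma_{i,i}\le\sigmadmax$, getting a variance proxy $\nu^2\le C_1(1+4\sigma_X^2)^2\sigmadmax^2$ and width $b\le C_2(1+4\sigma_X^2)\sigmadmax$ with absolute $C_1,C_2$.

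Then Bernstein's inequality for a sum of $m$ independent centered sub-exponential variables, applied to $\frac1m\sum_kZ_k$ at deviation level $t=\Sigma_{i,i}/2\ge\sigmadmin/2$ (lower-bounding the required deviation via $\Sigma_{i,i}\ge\sigmadmin$), yields a bound of the form $\exp\bigl(-\tfrac m2\min(t^2/\nu^2,\,t/b)\bigr)$ for each tail, hence $4\exp(\cdot)$ for both. It remains to verify that the quadratic branch $t^2/\nu^2$ is the active one in the $\min$ --- which holds after possibly enlarging $C_1$, using only $\sigmadmin\le\sigmadmax$ --- and to collect $C_1$ with the factor $\tfrac12$ from Bernstein and the factor $\tfrac14$ coming from $t^2=\Sigma_{i,i}^2/4\ge\sigmadmin^2/4$ into the constant $512$ (so $C_1=64$). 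This produces $4\exp\bigl(-m\,\sigmadmin^2/(512(1+4\sigma_X^2)^2\sigmadmax^2)\bigr)$, which is the claimed bound with $m$ (the per-column count of observed samples, $\le n$) in place of $n$.

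The main obstacle is the explicit constant bookkeeping: obtaining the sub-exponential parameters of $X_{k,i}^2-\Sigma_{i,i}$ with named constants --- in particular the bound on $\Expect{\exp(\lambda X_{k,i}^2)}$ and the admissible range of $\lambda$ --- propagating them through Bernstein's inequality, and checking that the $\min$ resolves to the quadratic term so that everything assembles into precisely $512(1+4\sigma_X^2)^2$. The probabilistic content itself, that an average of i.i.d.\ squared sub-Gaussian variables concentrates around its mean, is routine.
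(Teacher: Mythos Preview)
Your argument is correct, but the paper takes a shorter path: it simply sets $t=\tfrac12\sigmadmin$ in the black-box covariance concentration bound of \citet{ravikumar2011high} (stated in the appendix as a preliminary lemma), obtains $\abs{H_{i,i}-\Sigma_{i,i}}\le\tfrac12\sigmadmin$ with the stated probability, and then does two lines of algebra to turn this into the ratio bound $\tfrac12\le H_{i,i}/\Sigma_{i,i}\le\tfrac32$. What you propose --- showing $X_{k,i}^2-\Sigma_{i,i}$ is sub-exponential with parameters of order $(1+4\sigma_X^2)\sigmadmax$ and then invoking Bernstein --- is precisely the mechanism that sits inside that cited lemma, so you are unpacking the black box rather than taking a genuinely different route. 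Your version buys self-containment and forces you to track the constant $512$ explicitly; the paper's version buys brevity by outsourcing that bookkeeping. Your side remark that the exponent should carry $m=\abs{\{k:M_{k,i}=1\}}$ rather than $n$ is well taken and is a point the paper glosses over when it invokes the $n$-sample lemma on a column with possibly fewer observed entries.
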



We now present the consistency guarantee for our imputation method proposed in Algorithm \ref{alg:all}.
\begin{theorem}
For every feature $i\in [p]$, if the population neighbor score fulfills
\begin{equation*}
\zeta_{i,\Pi(i)} - 3 \zeta_{i,j}
>
\abs{\Sigma_{\Pi(i),\Pi(i)}} + 3 \abs{\Sigma_{j,j}} + \sigmadmin  
\end{equation*}
for every feature $j \neq \Pi(i)$, then the sample neighbor score fulfills 
\begin{equation*}
\Prob{\zetahat_{i,\Pi(i)} > \zetahat_{i,j}} 
\geq 
1 - 4\exp\left(-\frac{n \sigmadmin^2}{512 (1+4\sigma_X^2)^2 \sigmadmax^2}\right) \,.  
\end{equation*}
Consequently, the imputation result from the empirical score is consistent with the imputation result from the population score with high probability.
\label{thm:score_sufficient}
\end{theorem}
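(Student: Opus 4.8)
The plan is to reduce the statement to a single per-pair concentration inequality. Fix the feature $i$ and a competitor $j\neq\Pi(i)$; everything then comes down to comparing the empirical ratio $\zetahat_{i,j}=H_{i,j}^2/H_{j,j}$ with its population version $\zeta_{i,j}=\Sigma_{i,j}^2/\Sigma_{j,j}$, so I would assemble a lower bound for $\zetahat_{i,\Pi(i)}$ and an upper bound for $\zetahat_{i,j}$, each valid on a single high-probability event. As the proof intends, $\Pi(i)$ in the hypothesis should be read as the \emph{population} top neighbor; the conclusion, once combined over all competitors, is exactly that the empirical $\argmax$ agrees with it, which is what makes the two imputation rules in Algorithm~\ref{alg:all} coincide.

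For the denominators, Lemma~\ref{lemma:sigmaHratio} applied to features $\Pi(i)$ and $j$ gives $\tfrac12\Sigma_{m,m}\le H_{m,m}\le\tfrac32\Sigma_{m,m}$ on an event of probability at least $1-4\exp(-n\sigmadmin^2/(512(1+4\sigma_X^2)^2\sigmadmax^2))$. For the off-diagonal numerators $H_{i,\Pi(i)}$ and $H_{i,j}$ I would use that a product $X_{k,i}X_{k,j}$ of two sub-Gaussian coordinates is sub-exponential, that the deterministic filter $M$ merely restricts the average defining $H_{i,j}$ to a fixed sub-sample with $\Expect{H_{i,j}}=\Sigma_{i,j}$, and then apply a Bernstein-type inequality; equivalently, one can route this through the polarization identity $2X_{k,i}X_{k,j}=(X_{k,i}+X_{k,j})^2-X_{k,i}^2-X_{k,j}^2$ and reuse the variance-ratio bound of Lemma~\ref{lemma:sigmaHratio} for the sub-Gaussian combinations $X_{k,i}\pm X_{k,j}$, which keeps the failure probability of exactly the stated form. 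Either way the output is $H_{i,j}^2\le\Sigma_{i,j}^2+O(\Sigma_{j,j}^2)$ together with $H_{i,\Pi(i)}^2\ge\Sigma_{i,\Pi(i)}^2-O(\Sigma_{\Pi(i),\Pi(i)}^2)$, with the residual $\sigmadmin$ absorbing lower-order slack.

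Plugging these in, on the good event one has
\[
\zetahat_{i,\Pi(i)}=\frac{H_{i,\Pi(i)}^2}{H_{\Pi(i),\Pi(i)}}\ \ge\ \frac{2}{3}\cdot\frac{\Sigma_{i,\Pi(i)}^2-O(\Sigma_{\Pi(i),\Pi(i)}^2)}{\Sigma_{\Pi(i),\Pi(i)}},
\qquad
\zetahat_{i,j}=\frac{H_{i,j}^2}{H_{j,j}}\ \le\ 2\cdot\frac{\Sigma_{i,j}^2+O(\Sigma_{j,j}^2)}{\Sigma_{j,j}}.
\]
Requiring the lower bound on $\zetahat_{i,\Pi(i)}$ to exceed the upper bound on $\zetahat_{i,j}$, then multiplying through by $\tfrac32$ and collecting terms, shows that $\zetahat_{i,\Pi(i)}>\zetahat_{i,j}$ is implied by exactly the hypothesis $\zeta_{i,\Pi(i)}-3\zeta_{i,j}>\abs{\Sigma_{\Pi(i),\Pi(i)}}+3\abs{\Sigma_{j,j}}+\sigmadmin$. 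This establishes the displayed probability bound, and the ``consequently'' clause follows by a union bound over the at most $p-1$ competitors $j$: on the resulting event the empirical $\argmax$ equals $\Pi(i)$, so the column $\Xbar_{k,i}=X_{k,\Pi(i)}\tauhat_i$ imputed by Algorithm~\ref{alg:all} uses the same neighbor as the population rule, and $n$ logarithmic in $p$ keeps the total failure probability small.

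The main obstacle is the bookkeeping around the \emph{squared} off-diagonal entry: $H_{i,j}^2$ is not directly covered by the diagonal variance-ratio lemma, so one must either invoke the auxiliary sub-exponential concentration — checking that its exponent matches, so the $1-4\exp(\cdot)$ form survives rather than being inflated by a union bound — or push everything through the polarization identity while verifying that the sub-Gaussian parameter of $X_{k,i}\pm X_{k,j}$ remains in the regime of Lemma~\ref{lemma:sigmaHratio}. The other delicate point is carrying the asymmetric $\tfrac32$ and $\tfrac12$ factors and the cross terms through so that the accumulated error is exactly $\abs{\Sigma_{\Pi(i),\Pi(i)}}+3\abs{\Sigma_{j,j}}+\sigmadmin$; the factor $3$ in front of $\zeta_{i,j}$ in the hypothesis is the fingerprint of the ratio $\tfrac{3/2}{1/2}=3$ between the two sandwich bounds.
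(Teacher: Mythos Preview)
Your proposal is correct and follows essentially the same route as the paper: concentrate $H_{i,j}$ around $\Sigma_{i,j}$, pass to $H_{i,j}^2$, divide by the concentrated denominator using Lemma~\ref{lemma:sigmaHratio}, and then compare the resulting lower bound on $\zetahat_{i,\Pi(i)}$ with the upper bound on $\zetahat_{i,j}$, the ratio $\tfrac{3/2}{1/2}=3$ producing the coefficient in the hypothesis exactly as you say. The only simplification you miss is that the off-diagonal concentration is already available as the preliminary Lemma~\ref{lemma:ravikumar}, so no separate Bernstein or polarization argument is needed; the paper then obtains $\abs{H_{i,j}^2-\Sigma_{i,j}^2}$ directly from the factorization $(H_{i,j}-\Sigma_{i,j})(H_{i,j}+\Sigma_{i,j})$ with $t=\tfrac12\sigmadmin$, which is how the precise slack $\tfrac14\sigmadmin+\abs{\Sigma_{j,j}}$ per side --- and hence the $\sigmadmin=\tfrac14\sigmadmin+3\cdot\tfrac14\sigmadmin$ in the hypothesis --- emerges.
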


\begin{remark}
In the statement above we have a coefficient of $3$. This coefficient is determined by the setting $t = \frac{1}{2}\sigmadmin$ in Lemma \ref{lemma:sigmaHratio}, and
can be changed to any constant that is greater but arbitrarily close to $1$. This will only affect the constant terms in the high probability statement, and the $1-O(\exp(-n))$ rate holds. 
\end{remark}


\subsection{Primal-dual Witness}
We prove the correctness of Algorithm \ref{alg:all} through the primal-dual witness framework and Karush-Kuhn-Tucker (KKT) conditions at the optimum.  

\textbf{Step 1}: Let $\wtil$ be the solution (primal variable) to the following restricted problem
\begin{equation}
\minimize_{w_S \in \R^{s}} \quad l((w_S,\zerovct)) + \lambda \normone{w_S} \,,
\label{kkt:restricted}
\end{equation} 
with $\wtil_\Scomp = \zerovct$.

\textbf{Step 2}: Let $z \in \R^p$ be the dual variable fulfilling the complementary slackness condition on $S$. That is, for every $i\in S$, $z_i = \sgn{\wtil_i}$ if $\wtil_i \neq 0$, and $z_i \in [-1,+1]$ otherwise.

\textbf{Step 3}: Solve for $z_\Scomp \in \R^{p-s}$ to fulfill the following stationarity conditions:
\begin{align}
[\nabla l((\wtil_S,\zerovct))]_S + \lambda z_S &= \zerovct \\
[\nabla l((\wtil_S,\zerovct))]_\Scomp + \lambda z_\Scomp &= \zerovct 
\label{kkt:stationarity}
\end{align}

\textbf{Step 4}: Verify that the strict dual feasibility condition is fulfilled:
\begin{equation}
\norminf{z_\Scomp} < 1 \,.
\label{kkt:df}
\end{equation}

Since Step 1 through 3 are constructive, it is sufficient to prove Step 4. If the conditions above are fulfilled, our Algorithm \ref{alg:all} recovers the true support set $\Shat = S$.


\subsection{Optimization}

We first consider the quadratic loss function $l(w)$.
Note that
\begin{align*}
l(w) 
&= \frac{1}{2n} \normsq{\Xhat w - y} \\
&= \frac{1}{2n} \normsq{\Xhat w - X w^\ast - \epsilon} \\
&= \frac{1}{2n} \Vert(M \circ X + \Mcomp \circ \Xbar) w  - (M \circ X + \Mcomp \circ X) w^\ast - \epsilon\Vert^2 \\
&= \frac{1}{2n} \Vert(M \circ X) (w - w^\ast) + (\Mcomp \circ \Xbar) w - (\Mcomp \circ X) w^\ast - \epsilon \Vert^2 \,.
\end{align*}
We have the gradient 
\begin{align*}
\nabla l(w)
&= \frac{1}{n} \Xhat^\top ( (M \circ X) (w - w^\ast) + (\Mcomp \circ \Xbar) w - (\Mcomp \circ X) w^\ast - \epsilon ) \,,
\end{align*}
and Hessian
$\nabla^2 l(w)
= \frac{1}{n} \Xhat^\top \Xhat$.
In particular, we can define the sample covariance matrix of the imputed matrix as $\Hhat := \nabla^2 l(w) = \frac{1}{n} \Xhat^\top \Xhat$.

We now consider the restricted problem and the stationarity conditions. Expanding \eqref{kkt:stationarity} leads to
\begin{align*}
\frac{1}{n} \Xhat_S^\top [ (M \circ X)_S (\wtil_S - w_S^\ast) + (\Mcomp \circ \Xbar)_S \wtil_S - (\Mcomp \circ X)_S w_S^\ast - \epsilon ] + \lambda z_S &= \zerovct \\
\frac{1}{n} \Xhat_\Scomp^\top [ (M \circ X)_S (\wtil_S - w_S^\ast) + (\Mcomp \circ \Xbar)_S \wtil_S  - (\Mcomp \circ X)_S w_S^\ast - \epsilon ] + \lambda z_\Scomp &= \zerovct \,.
\end{align*}

Next we solve for $z$. On the support set we have 
\begin{align}
z_S &= 
-\frac{1}{\lambda n} \Xhat_S^\top
[ (M \circ X)_S (\wtil_S - w_S^\ast)  + (\Mcomp \circ \Xbar)_S \wtil_S - (\Mcomp \circ X)_S w_S^\ast - \epsilon ] \nonumber\\
&= 
-\frac{1}{\lambda n} \Xhat_S^\top
[ (M \circ X)_S (\wtil_S - w_S^\ast)  + (\Mcomp \circ \Xbar)_S (\wtil_S - w_S^\ast)  + (\Mcomp \circ \Xbar - \Mcomp \circ X)_S w_S^\ast   - \epsilon  ] \nonumber\\
&= 
-\frac{1}{\lambda n} \Xhat_S^\top
[ \Xhat_S (\wtil_S - w_S^\ast)  + (\Mcomp \circ \Xbar - \Mcomp \circ X)_S w_S^\ast   - \epsilon  ] \nonumber\\ 
&= 
-\frac{1}{\lambda n} \Xhat_S^\top
[ \Xhat_S (\wtil_S - w_S^\ast) + \Delta_S w_S^\ast   - \epsilon  ] \,.
\label{eq:zs_stationarity}
\end{align}
Similarly on the complement set we have 
\begin{equation}
z_\Scomp = 
-\frac{1}{\lambda n} \Xhat_\Scomp^\top
\left[ \Xhat_S (\wtil_S - w_S^\ast) + \Delta_S w_S^\ast   - \epsilon  \right] \,.
\label{eq:zsc_stationarity}
\end{equation}
Rearranging the terms in \eqref{eq:zs_stationarity} leads to
$
\wtil_S - w_S^\ast = - (\Xhat_S^\top \Xhat_S)^{-1} 
                    \left(\Xhat_S^\top (\Delta_S w_S^\ast - \epsilon) + \lambda n z_S \right)
$.
Plugging the last equation into \eqref{eq:zsc_stationarity}, we obtain
$z_\Scomp = \za + \zb$,
where we use the shorthand notation
\begin{align}
\za &:= \frac{1}{\lambda n} \Xhat_\Scomp^\top \left(\Imtx - \Xhat_S(\Xhat_S^\top \Xhat_S)^{-1} \Xhat_S^\top \right) (\epsilon - \Delta_S w_S^\ast)  \,, \label{eq:part_A}\\
\zb &:= \Xhat_{\Scomp}^\top \Xhat_S (\Xhat_S^\top \Xhat_S)^{-1} z_S \,. \label{eq:part_B}
\end{align}
It remains to verify the strict dual feasibility condition 
$\norminf{z_\Scomp} = \norminf{\za + \zb} < 1$.
This can be further broken down into two parts: we first prove that $\norminf{\za} < \gamma / 4$, and then prove $\norminf{\zb} \leq 1-\gamma/4$.


\subsection{Bound of $\za$}
In this section we analyze the upper bound of $\norminf{\za}$.
For every feature $i\in \Scomp$ and sample $k \in [n]$, we define the following variance proxy 
\[
h_k^2 := 
\sigma_\epsilon^2 + \sigma_X^2 \sum_{i\in S} (\tauhat_i^2 \Sigma_{\Pi(i),\Pi(i)}+\Sigma_{i,i}) \Mcomp_{k,i} (w_i^\ast)^2 
\,,
\]
and 
\[
g_k(i)^2 := M_{k,i} \sigma_X^2 \Sigma_{i,i} + (1-M_{k,i}) \frac{9}{4} \sigma_X^2 \Sigma_{\Pi(i),\Pi(i)} \tau_i^2
\,,
\]
assuming $h_k, g_k(i) \geq 0$.
We also denote the maximum variance proxy as
\[
\hmax = \max_{k\in n} h_k \,, \qquad 
\gmax = \max_{k\in n} \max_{i\in \Scomp} g_k(i) \,,   
\]
across all sample $k \in [n]$.
We now provide the statement of the theorem.
\begin{theorem}
By setting the regularization parameter 
\[
\lambda > 20 \hmax \cdot \gmax / \gamma \,,
\]
we have $\norminf{\za} < \gamma / 4$ with probability at least $1-O((p-s)\exp(-n))$.
\label{thm:za}
\end{theorem}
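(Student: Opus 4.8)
The plan is to bound $\norminf{\za}$ by a standard sub-Gaussian concentration argument on each coordinate, exploiting the orthogonal projection structure. Writing $\proj^{\perp} := \Imtx - \Xhat_S(\Xhat_S^\top \Xhat_S)^{-1} \Xhat_S^\top$, the $i$-th coordinate of $\za$ is $\frac{1}{\lambda n}\Xhat_i^\top \proj^{\perp}(\epsilon - \Delta_S w_S^\ast)$ for $i \in \Scomp$. The key observation is that $\proj^{\perp}$ is a symmetric idempotent matrix with $\mnorm{\proj^{\perp}} \le 1$, so conditioning on $\Xhat$ (equivalently, on $X$ and $M$, since the imputation depends only on those), the vector $v_i := \frac{1}{\lambda n}\proj^{\perp}\Xhat_i$ is fixed, and $\za_i = v_i^\top(\epsilon - \Delta_S w_S^\ast)$ is a linear functional of the independent-across-samples noise terms.

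First I would show that, conditionally on $X$ and $M$, the $k$-th entry of $\epsilon - \Delta_S w_S^\ast$ is zero-mean sub-Gaussian with parameter at most $h_k^2$: $\epsilon_k$ contributes $\sigma_\epsilon^2$, and $(\Delta_S w_S^\ast)_k = \sum_{i\in S}\Mcomp_{k,i}(\tauhat_i X_{k,\Pi(i)} - X_{k,i})w_i^\ast$, where each $X_{k,\Pi(i)}$ and $X_{k,i}$ is sub-Gaussian with parameters $\sigma_X^2\Sigma_{\Pi(i),\Pi(i)}$ and $\sigma_X^2\Sigma_{i,i}$ respectively; summing variance proxies over the coordinates of $S$ touched by the censorship pattern gives exactly the stated $h_k^2$ (using that the relevant $X$-columns are, after conditioning, just the realized values, and independence across $k$). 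Hence $\za_i$, as a weighted sum $\sum_k (v_i)_k (\epsilon - \Delta_S w_S^\ast)_k$ of independent sub-Gaussians, is sub-Gaussian with parameter $\hmax^2 \norm{v_i}^2$. Second, I would bound $\norm{v_i}^2 = \frac{1}{\lambda^2 n^2}\normsq{\proj^{\perp}\Xhat_i} \le \frac{1}{\lambda^2 n^2}\normsq{\Xhat_i} = \frac{1}{\lambda^2 n^2}\sum_k \Xhat_{k,i}^2$. Here $\Xhat_{k,i}$ equals $X_{k,i}$ if $M_{k,i}=1$ and $\tauhat_i X_{k,\Pi(i)}$ otherwise; this is where $g_k(i)^2$ enters — it is (up to the $9/4$ slack coming from bounding $\tauhat_i^2$ by $\tfrac{9}{4}\tau_i^2$ via Lemma \ref{lemma:sigmaHratio}-type control, and replacing $H_{\Pi(i),\Pi(i)}$ in the denominator by its population value) a high-probability upper bound on $\Expect{\Xhat_{k,i}^2}$, so a further sub-exponential concentration of $\frac{1}{n}\sum_k \Xhat_{k,i}^2$ around $\gmax^2$ gives $\norm{v_i}^2 \le \frac{c\,\gmax^2}{\lambda^2 n}$ with probability $1 - O(\exp(-n))$ for a small absolute constant $c$.

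Putting the pieces together: on the good event, $\za_i$ is sub-Gaussian with parameter $O(\hmax^2 \gmax^2 / (\lambda^2 n))$, so $\Prob{\abs{\za_i} > \gamma/4} \le 2\exp(-c' \lambda^2 n \gamma^2 / (\hmax^2\gmax^2))$; a union bound over the $p - s$ coordinates of $\Scomp$ yields $\Prob{\norminf{\za} > \gamma/4} \le O((p-s)\exp(-n))$ once $\lambda > 20\hmax\gmax/\gamma$ makes the exponent at least of order $n$ (the constant $20$ absorbing $c'$, the factor $4$, and the slack from the two conditioning events). I would also fold in the failure probabilities of the auxiliary events (the $H_{i,i}/\Sigma_{i,i}$ ratio bound controlling $\tauhat_i$, and the $\frac{1}{n}\sum_k \Xhat_{k,i}^2$ concentration) into the $O(\exp(-n))$ term via another union bound.

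The main obstacle is the careful handling of the conditioning: the imputed columns $\Xhat_i$ depend on the empirical quantities $\tauhat_i$ and $\Pi(i)$, which are themselves functions of the full data $X$ and $M$, so $\za$ is not simply a linear functional of fresh independent noise unless one conditions correctly and then controls the data-dependent coefficients on a high-probability event. The cleanest route is to condition on $X$ and $M$ entirely (freezing $\Xhat$, $\proj^\perp$, and the $v_i$), invoke sub-Gaussianity of $\epsilon$ alone for the noise part and sub-Gaussianity of the (now deterministic, given $X$) shifts for the $\Delta_S w_S^\ast$ part — but this requires that the $h_k^2$ proxy genuinely dominates the conditional variance proxy of $(\epsilon - \Delta_S w_S^\ast)_k$, which needs the triangle-type inequality for sub-Gaussian parameters and the bound $\tauhat_i^2 \le \tfrac{9}{4}\tau_i^2$ (on the good event) — and separately that $\normsq{\Xhat_i}/n$ concentrates near $\gmax^2$, again needing $\tauhat_i$ controlled. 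Keeping these two high-probability events explicit and threading their constants through the final exponent (to land exactly on the stated $\lambda > 20\hmax\gmax/\gamma$) is the delicate bookkeeping; the probabilistic content itself is routine Hoeffding/Bernstein-type concentration.
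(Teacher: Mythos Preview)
Your conditioning step contains a genuine gap. You propose to condition on $X$ and $M$ (thereby freezing $\Xhat$, $\proj^\perp$, and $v_i$) and then assert that ``conditionally on $X$ and $M$, the $k$-th entry of $\epsilon - \Delta_S w_S^\ast$ is zero-mean sub-Gaussian with parameter at most $h_k^2$.'' But once you condition on $X$, the term $(\Delta_S w_S^\ast)_k = \sum_{i\in S}\Mcomp_{k,i}(\tauhat_i X_{k,\Pi(i)} - X_{k,i})w_i^\ast$ is a \emph{deterministic constant}, not a zero-mean sub-Gaussian. The conditional law of $(\epsilon - \Delta_S w_S^\ast)_k$ given $X$ is $\epsilon_k$ shifted by this constant; its conditional mean is $-(\Delta_S w_S^\ast)_k \neq 0$ in general, and the variance proxy is $\sigma_\epsilon^2$, not $h_k^2$. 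Your own final paragraph flags this as ``the main obstacle,'' but the proposed resolution (``sub-Gaussianity of the (now deterministic, given $X$) shifts'') is not a meaningful statement, and you cannot get a Hoeffding-type tail on $v_i^\top(\epsilon - \Delta_S w_S^\ast)$ with parameter $\hmax^2\norm{v_i}^2$ from this conditioning.

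The paper sidesteps the dependence between $v_i$ and $\Delta_S$ altogether by a Cauchy--Schwarz decoupling rather than conditioning. It writes
\[
\abs{\za_j} \;\le\; \frac{1}{\lambda n}\,\norm{(\Imtx - \proj_{\Xhat_S})(\epsilon - \Delta_S w_S^\ast)}\cdot\norm{\Xhat_j}
\;\le\; \frac{1}{\lambda n}\,\norm{\epsilon - \Delta_S w_S^\ast}\cdot\norm{\Xhat_j}\,,
\]
and then bounds each norm \emph{unconditionally} via the Hanson--Wright/Hsu-type tail for sub-Gaussian vectors (Lemma~\ref{lemma:hsu}), treating $\epsilon - \Delta_S w_S^\ast$ as a sub-Gaussian vector with parameter $\hmax^2$ and $\Xhat_j$ as one with parameter $\gmax^2$. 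Setting $t=n$ in each application gives $\norm{\epsilon - \Delta_S w_S^\ast}\le \hmax\sqrt{5n}$ and $\norm{\Xhat_j}\le \gmax\sqrt{5n}$, each with probability $1-O(\exp(-n))$, whence on the good event $\abs{\za_j}\le 5\hmax\gmax/\lambda$ \emph{deterministically}. The constant $20$ is exactly $5\times 4$ from this calculation; it is not an artifact of a sub-Gaussian tail exponent as in your outline, and your route would not land on it. The $(p-s)$ factor then comes from a union bound over $j\in\Scomp$ on the $\norm{\Xhat_j}$ events. If you want to repair your argument, the natural fix is precisely this decoupling: bound $\abs{v_i^\top(\epsilon-\Delta_S w_S^\ast)}\le \norm{v_i}\,\norm{\epsilon - \Delta_S w_S^\ast}$ and control the two factors separately, which is the paper's proof.
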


\begin{remark}
Our proof relies on the careful analysis of the sub-Gaussian condition of the imputed matrix. Techniques from prior literature do not work in our model, because our missing structure $M$ is deterministic and cannot be reduced to some uniformly random noise. For instance, one classic technique is to write $X_\Scomp$ as a predictor of $X_S$ using the conditional covariance matrix \citep{wainwright2009}. This does not work in our case, because $\Xhat$ (imputed matrix) will not cancel with the complement projection on $X$ (original matrix). 
\end{remark}

\begin{remark}
One may note that the magnitude of $\norm{\epsilon - \Delta_S w_S^\ast}$ is directly related to the quality of regression in the proof above. 
Intuitively, the whole term measures the noise level in our algorithm: 
$\epsilon$ for the noise generated by nature, and
$\Delta_S w_S^\ast$ for the imputation noise, consisting of the imputation error on the support $\Delta_S$ and the ground truth $w_S^\ast$.  
This provides the insight, that if the magnitude of $w^\ast$ is large, censored sparsity recovery will be harder because of a higher imputation noise level. 
\end{remark}

\subsection{Bound of $\zb$}
Here we provide the upper bound for $\zb$. 
Our analysis relies on the following auxillary lemma.


\begin{lemma}
Under the mild condition $\gamma < 6/7$, the sample covariance matrix $H$ fulfills the mutual incoherence condition 
\[
\mnorminf{H_{\Scomp,S} H_{S,S}^{-1}} \leq 1 - \gamma/2 \,,
\]
with probability at least
$1 
- O\left(s(p-s) \exp\left(-\frac{\beta^2 \gamma^2 n}{s^3}\right)\right)
- O\left(s^2 \exp\left(-\frac{\beta^2 \gamma^2 n}{s^3 (1-\gamma)^2 }\right)\right)$.
\label{lemma:H_mutual_incoherence}
\end{lemma}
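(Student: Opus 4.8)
## Proof Proposal for Lemma \ref{lemma:H_mutual_incoherence}

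The plan is to transfer the population mutual incoherence of Assumption~\ref{assumption:mutual_incoherence} to the empirical covariance by treating $H$ as a perturbation of $\Sigma$, in the spirit of the classical fully-observed Lasso analysis but carrying the censored averaging through the concentration step. Write $E := H - \Sigma$, so $H_{S,S} = \Sigma_{S,S} + E_{S,S}$ and $H_{\Scomp,S} = \Sigma_{\Scomp,S} + E_{\Scomp,S}$. Using the resolvent identity $H_{S,S}^{-1} = \Sigma_{S,S}^{-1} - \Sigma_{S,S}^{-1} E_{S,S} H_{S,S}^{-1}$, one expands
\[
H_{\Scomp,S} H_{S,S}^{-1} = \Sigma_{\Scomp,S}\Sigma_{S,S}^{-1} + R,
\]
with
\[
R := E_{\Scomp,S}\Sigma_{S,S}^{-1} - \Sigma_{\Scomp,S}\Sigma_{S,S}^{-1}E_{S,S} H_{S,S}^{-1} - E_{\Scomp,S}\Sigma_{S,S}^{-1} E_{S,S} H_{S,S}^{-1}.
\]
Since $\mnorminf{\Sigma_{\Scomp,S}\Sigma_{S,S}^{-1}} \leq 1-\gamma$ by Assumption~\ref{assumption:mutual_incoherence}, it suffices to prove $\mnorminf{R} \leq \gamma/2$ on a high-probability event.

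For the deterministic bookkeeping I would bound $\mnorminf{R}$ by submultiplicativity of $\mnorminf{\cdot}$, using three ingredients: (i) $\mnorminf{\Sigma_{S,S}^{-1}} \leq \sqrt{s}\,\mnorm{\Sigma_{S,S}^{-1}} = \sqrt{s}/\beta$ from Assumption~\ref{assumption:mineigen}; (ii) on the event $\normf{E_{S,S}} \leq \beta/2$ one has $\lmin(H_{S,S}) \geq \beta/2$, hence $\mnorminf{H_{S,S}^{-1}} \leq 2\sqrt{s}/\beta$; and (iii) $\mnorminf{\Sigma_{\Scomp,S}\Sigma_{S,S}^{-1}} \leq 1-\gamma$. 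This gives
\[
\mnorminf{R} \;\lesssim\; \frac{\sqrt{s}}{\beta}\,\mnorminf{E_{\Scomp,S}} + \frac{(1-\gamma)\sqrt{s}}{\beta}\,\mnorminf{E_{S,S}} + \frac{2s}{\beta^2}\,\mnorminf{E_{\Scomp,S}}\,\mnorminf{E_{S,S}},
\]
the last term being dominated once the errors are small. Allocating a $\gamma/4$ budget to each of the first two terms, it is enough to have $\mnorminf{E_{\Scomp,S}} \lesssim \beta\gamma/\sqrt{s}$ and $\mnorminf{E_{S,S}} \lesssim \beta\gamma/(\sqrt{s}(1-\gamma))$; since both are matrices with $s$ columns, $\mnorminf{\cdot} \leq s\max_{i,j}\abs{E_{i,j}}$, so it suffices to control the individual entries at levels $\delta_1 \asymp \beta\gamma/s^{3/2}$ for the rows indexed by $\Scomp$ and $\delta_2 \asymp \beta\gamma/(s^{3/2}(1-\gamma))$ for the block $S\times S$. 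The mild condition $\gamma < 6/7$ keeps $1-\gamma$ bounded away from zero so that these two requirements, together with the event $\normf{E_{S,S}}\leq\beta/2$, can be met simultaneously inside the $\gamma/2$ budget.

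The probabilistic part is an entrywise concentration of $H$. For the pairs in question, $H_{i,j} - \Sigma_{i,j} = \frac{1}{N_{ij}}\sum_{k:\,M_{k,i}=M_{k,j}=1}(X_{k,i}X_{k,j} - \Sigma_{i,j})$, a centered average over the $N_{ij}$ co-observed rows; each summand is a product of sub-Gaussians, hence sub-exponential with parameter of order $\sigma_X^2\sigmadmax$, so Bernstein's inequality gives, in the small-$\delta$ regime, $\Prob{\abs{H_{i,j}-\Sigma_{i,j}} > \delta} \leq 2\exp(-c\,N_{ij}\,\delta^2)$ with a constant absorbing $\sigma_X$ and $\sigmadmax$, and with $N_{ij} = \Omega(n)$ for the pairs touched here. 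A union bound over the $s(p-s)$ entries of $E_{\Scomp,S}$ at level $\delta_1$ and the $s^2$ entries of $E_{S,S}$ at level $\delta_2$ then produces exactly the two terms $s(p-s)\exp(-c\beta^2\gamma^2 n/s^3)$ and $s^2\exp(-c\beta^2\gamma^2 n/(s^3(1-\gamma)^2))$ in the failure probability; on the complementary event both $\delta$-bounds hold, which in particular certifies $\normf{E_{S,S}} \leq s\delta_2 \leq \beta/2$ used above, so the argument closes and $\mnorminf{H_{\Scomp,S}H_{S,S}^{-1}} \leq 1-\gamma + \gamma/2 = 1-\gamma/2$.

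The main obstacle is the censored averaging structure of $H$: each entry $H_{i,j}$ is an average over a deterministic, feature-pair-dependent subset of rows, so the effective sample size $N_{ij}$ is not uniform and distinct entries of $H$ are coupled through shared rows. The entrywise union-bound route sidesteps the coupling, since it only needs marginal Bernstein bounds, but in exchange it forces one to guarantee $N_{ij} = \Omega(n)$ for every pair $(i,j)$ with $j\in S$ — that is, to rule out feature pairs that are almost never jointly observed — and this is precisely where the topology of the censorship filter $M$ enters. Reconciling those topological co-observation counts with the constant bookkeeping that yields the $\gamma<6/7$ threshold, and checking that the higher-order term in the bound on $\mnorminf{R}$ is genuinely negligible under the chosen $\delta$'s, are the two places where care is required.
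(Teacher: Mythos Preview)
Your proposal is correct and follows essentially the same route as the paper: the paper writes $H_{\Scomp,S}H_{S,S}^{-1}$ as the sum of $\Sigma_{\Scomp,S}\Sigma_{S,S}^{-1}$ and the same three perturbation terms you obtain from the resolvent identity, then controls each via entrywise concentration (Lemma~\ref{lemma:ravikumar}) plus a union bound, with $\mnorminf{H_{S,S}^{-1}}\leq 2\sqrt{s}/\beta$ on the good event. The only substantive difference is bookkeeping: the paper allocates $\gamma/6$ to \emph{each} of the three remainder terms (rather than $\gamma/4$ to the first two with the cross term ``dominated''), and the condition $\gamma<6/7$ arises precisely from forcing the cross term $\tfrac{2s}{\beta^2}\mnorminf{E_{\Scomp,S}}\mnorminf{E_{S,S}}$ below $\gamma/6$ under the chosen thresholds---not from keeping $1-\gamma$ bounded away from zero as you suggest.
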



\begin{theorem}
Under the mild condition $\gamma < 6/7$,
we have $\norminf{\zb} \leq 1-\gamma/4$ with probability at least 
$
1 
- O\left(s(p-s) \exp\left(-\frac{\beta^2 \gamma^2 n}{s^3}\right)\right)
- O\left(s^2 \exp\left(-\frac{\beta^2 \gamma^2 n}{s^3 (1-\gamma)^2 }\right)\right)
- O\left(s^2 \exp\left(-\frac{\beta^2 \gamma^2 n}{s^3 (1-\gamma/2)^2 }\right)\right)
$.  
\label{thm:zb}
\end{theorem}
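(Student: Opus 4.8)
The plan is to control $\norminf{\zb}$ where $\zb = \Xhat_{\Scomp}^\top \Xhat_S (\Xhat_S^\top \Xhat_S)^{-1} z_S$, using the fact that $z_S$ has all entries bounded by $1$ in absolute value (by the complementary slackness construction in Step 2 of the primal-dual witness). Writing $\Hhat = \frac{1}{n}\Xhat^\top\Xhat$ for the imputed sample covariance, we have $\zb = \Hhat_{\Scomp,S}\Hhat_{S,S}^{-1} z_S$, so
\begin{equation*}
\norminf{\zb} \leq \mnorminf{\Hhat_{\Scomp,S}\Hhat_{S,S}^{-1}} \norminf{z_S} \leq \mnorminf{\Hhat_{\Scomp,S}\Hhat_{S,S}^{-1}} \,.
\end{equation*}
Thus it suffices to show that the imputed sample covariance $\Hhat$ inherits a mutual incoherence bound of $1-\gamma/4$ with the claimed probability.

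The key step is to pass from $\Hhat$ (the covariance of the \emph{imputed} matrix $\Xhat = M\circ X + \Mcomp\circ\Xbar$) to $H$ (the covariance of the \emph{observed} matrix, handled in Lemma \ref{lemma:H_mutual_incoherence}). Since $\Xhat_S = M_S\circ X_S + \Delta_S$ (recall $\Delta := \Xhat - X$ and $\Delta_{k,i}=0$ whenever $M_{k,i}=1$), the blocks of $\Hhat$ differ from those of $H$ by terms involving $\frac{1}{n}\Delta^\top X$ and $\frac{1}{n}\Delta^\top\Delta$; the imputation rule $\Xbar_{k,i} = X_{k,\Pi(i)}\tauhat_i$ makes each column of $\Delta_S$ an explicit scalar multiple of an observed column of $X$, so these cross terms concentrate around their population analogues just as in Lemma \ref{lemma:sigmaHratio} and the concentrations behind Lemma \ref{lemma:H_mutual_incoherence}. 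I would argue: (i) $\Hhat_{S,S}$ is invertible and $\mnorminf{\Hhat_{S,S}^{-1}} = O(\sqrt{s}/\beta)$ on a high-probability event (via $\lmin(\Hhat_{S,S})\geq\beta/2$ type bounds, analogous to Assumption \ref{assumption:mineigen} made empirical); (ii) $\mnorminf{\Hhat_{\Scomp,S} - H_{\Scomp,S}}$ and $\mnorminf{\Hhat_{S,S}^{-1} - H_{S,S}^{-1}}$ are each small — say $O(\beta\gamma/s^{3/2})$ — with the stated probabilities, using standard perturbation identities for matrix inverses together with the neighbor-score consistency from Theorem \ref{thm:score_sufficient} (so that $\tauhat_i\approx\tau_i$ and $\Pi(i)$ is the correct feature); and (iii) combine via the triangle inequality and submultiplicativity of $\mnorminf{\cdot}$, so that
\begin{equation*}
\mnorminf{\Hhat_{\Scomp,S}\Hhat_{S,S}^{-1}} \leq \mnorminf{H_{\Scomp,S}H_{S,S}^{-1}} + (\text{small perturbation terms}) \leq 1-\gamma/2 + \gamma/4 = 1-\gamma/4 \,,
\end{equation*}
where the $1-\gamma/2$ is exactly Lemma \ref{lemma:H_mutual_incoherence}. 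The extra failure term $O(s^2\exp(-\beta^2\gamma^2 n/(s^3(1-\gamma/2)^2)))$ in the statement is the price of controlling the perturbation block $\Hhat_{S,S}^{-1}$ relative to the $1-\gamma/2$-incoherent $H$, so the exponent carries $(1-\gamma/2)^2$ rather than $(1-\gamma)^2$.

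The main obstacle I anticipate is step (ii): bounding the inverse perturbation $\mnorminf{\Hhat_{S,S}^{-1}-H_{S,S}^{-1}}$ cleanly. The identity $\Hhat_{S,S}^{-1} - H_{S,S}^{-1} = -\Hhat_{S,S}^{-1}(\Hhat_{S,S}-H_{S,S})H_{S,S}^{-1}$ turns this into controlling $\mnorminf{\Hhat_{S,S}-H_{S,S}}$ times the $\mnorminf$-norms of both inverses, each of which scales like $\sqrt{s}/\beta$; tracking all the $\sqrt{s}$ factors is what produces the $s^3$ in the exponent, and one must be careful that the perturbation $\Hhat_{S,S}-H_{S,S}$ (which mixes genuinely missing-pattern-dependent terms through $\Delta$ and $\tauhat$) concentrates at rate $\exp(-\Omega(\beta^2\gamma^2 n/s^3))$ entrywise — this requires the deterministic-$M$ sub-Gaussian bookkeeping flagged in the remark after Theorem \ref{thm:za}, since the off-support columns of $\Xhat$ cannot be rewritten via a conditional-covariance predictor of the support columns. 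Once the entrywise concentration and the uniform spectral lower bound on $\Hhat_{S,S}$ are in hand, a union bound over the $O(s(p-s))$ and $O(s^2)$ entries gives the stated probability.
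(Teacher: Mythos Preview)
Your proposal is correct and follows essentially the same route as the paper: reduce $\norminf{\zb}$ to $\mnorminf{\Hhat_{\Scomp,S}\Hhat_{S,S}^{-1}}$ via $\norminf{z_S}\leq 1$, expand into the four-term decomposition $H_{\Scomp,S}H_{S,S}^{-1}$ plus three perturbation pieces, invoke Lemma~\ref{lemma:H_mutual_incoherence} for the leading $1-\gamma/2$, and bound the remaining terms (the paper splits the residual $\gamma/4$ budget as $\gamma/12+\gamma/12+\gamma/12$) using the inverse-perturbation identity you wrote down together with entrywise concentration of $\Hhat-H$ (the paper's Lemma~\ref{lemma:partB_auxillary}) and the eigenvalue lower bound $\lmin(H_{S,S})\geq\beta/2$. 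The only minor slip is the target size of the block perturbations: the paper needs $\mnorminf{\Hhat_{\Scomp,S}-H_{\Scomp,S}}$ and $\mnorminf{\Hhat_{S,S}-H_{S,S}}$ of order $\beta\gamma/\sqrt{s}$ (not $\beta\gamma/s^{3/2}$), which after multiplication by $\mnorminf{H_{S,S}^{-1}}\leq 2\sqrt{s}/\beta$ yields the $\gamma/12$ pieces; the $s^3$ in the exponent then arises from the extra $1/s$ in the entrywise-to-$\mnorminf{\cdot}$ union bound.
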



\section{Discussions}
\label{section:discussion}
In this section, we validate the proposed Algorithm \ref{alg:all} through synthetic experiments.
 
\textbf{Experiment 1}: 
We test four imputation strategies in the task of censored sparsity recovery, including our method, imputation by zero, imputation by mean, and imputation by median.
We generate $w^\ast$ such that features in the support are randomly drawn in $[-1,-0.25] \cup [0.25, 1]$.
$X$ is generated from Gaussian distribution, with mean $\zerovct$ and covariance $\Sigma$. We set the diagonal of $\Sigma$ to $1$, and the off-diagonal to $0.8$. 
We control the number of samples $n = 1000$, number of features $p = 50$, and size of support $s = 10$. 
The variable is the percentage of missing entries in observed $\Xhat$.
We plot the probability of censored sparsity recovery $\Prob{\Shat = S}$ and the $l_\infty$ distance $\norminf{\what - w^\ast}$ between the recovered and the true vector, against the percentage of missing entries, in Figure \ref{fig:censor_prob} and Figure \ref{fig:censor_dist}, respectively. Each trial is run $100$ times.
It can be seen that our imputation strategy outperforms the others on both metrics.

\begin{figure}[h]
\centering
\begin{subfigure}[b]{.4\linewidth}
\includegraphics[width=\linewidth]{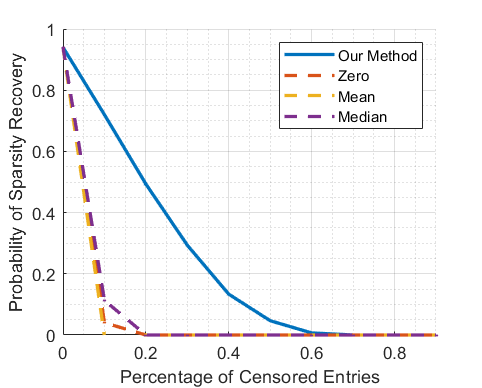}
\caption{Probability of Recovery}
\label{fig:censor_prob}
\end{subfigure}
\begin{subfigure}[b]{.4\linewidth}
\includegraphics[width=\linewidth]{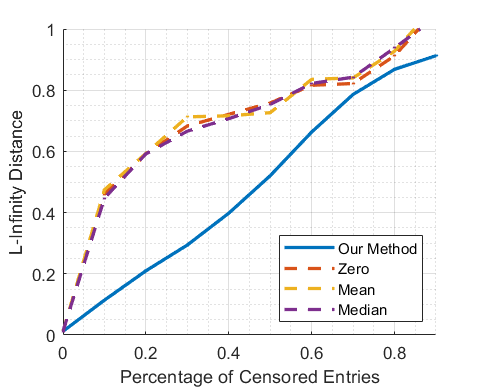}
\caption{$l_\infty$ Distance}
\label{fig:censor_dist}
\end{subfigure}
\caption{Validation across different numbers of missing entries. Our algorithm achieved recovery when $20\%$ of entries are censored with probability at least one half, while other approaches failed with high probability. The vector recovered by our algorithm is also closer to the ground truth.}
\end{figure}

\textbf{Experiment 2}: 
We fix the percentage of missing entries to be $20\%$, and run the same experiment with different $n$, $p$, and $s$.
To present the results, we define a weighted constant $C:= \log\left(\frac{n}{s^3 \log s(p-s) }\right)$, which is derived from the high probability bound in Theorem \ref{thm:za} and \ref{thm:zb}.
We plot the probability of censored sparsity recovery $\Prob{\Shat = S}$ and the $l_\infty$ distance $\norminf{\what - w^\ast}$ between the recovered and the true vector, against $C$ in Figure \ref{fig:C_prob} and Figure \ref{fig:C_dist}, respectively. Each trial is run $100$ times.
From Figure \ref{fig:C_prob}, one can see that our method achieved recovery with probability tending to $1$ if $C$ is large enough. This matches our prediction in Theorem \ref{thm:za} and \ref{thm:zb}.

\begin{figure}[h]
\centering
\begin{subfigure}[b]{.4\linewidth}
\includegraphics[width=\linewidth]{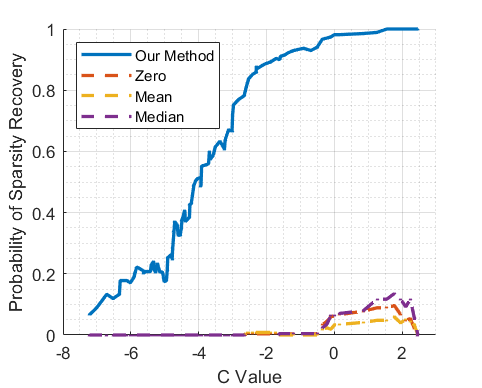}
\caption{Probability of Recovery}
\label{fig:C_prob}
\end{subfigure}
\begin{subfigure}[b]{.4\linewidth}
\includegraphics[width=\linewidth]{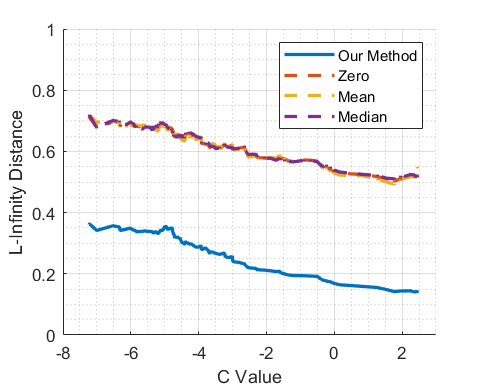}
\caption{$l_\infty$ Distance}
\label{fig:C_dist}
\end{subfigure}
\caption{Validation across different $C$, a weighted sample size. Our algorithm achieved recovery $C$ is large enough, matching our theoretical findings.}
\end{figure}

\begin{figure}[h]
\centering
\begin{subfigure}[b]{.6\linewidth}
\includegraphics[width=\linewidth]{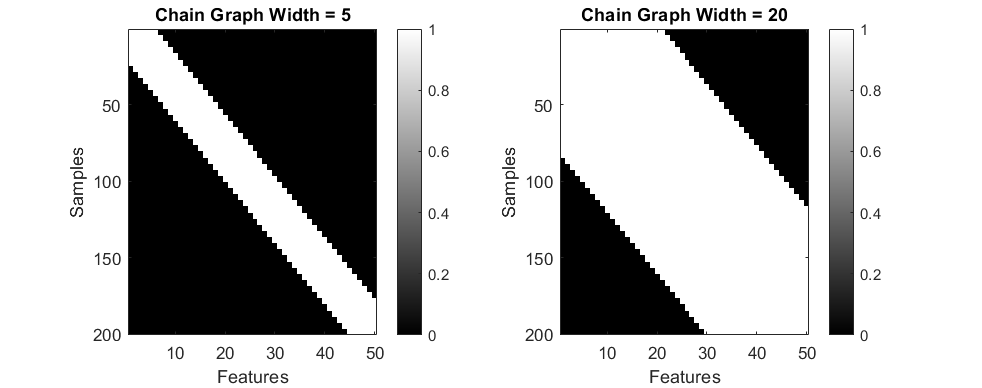}
\caption{Chain Structure}
\label{fig:chain_structure}
\end{subfigure}
\begin{subfigure}[b]{.35\linewidth}
\includegraphics[width=\linewidth]{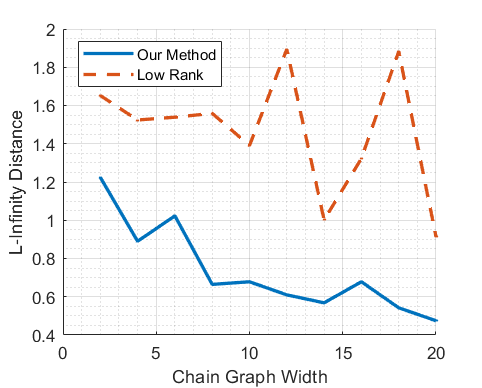}
\caption{$l_\infty$ Distance}
\label{fig:chain_dist}
\end{subfigure}
\caption{Validation across different $C$, a weighted sample size. Our algorithm achieved recovery $C$ is large enough, matching our theoretical findings.}
\end{figure}

\textbf{Experiment 3}: 
One of our contributions, is that we focus on the case when the missing data pattern is deterministic. 
As highlighted above, our analysis provides exact sparsity recovery guarantees when the missing data admit a nontrivial deterministic pattern. 
In contrast, low rank matrix completion assumes that the entries are missing uniformly at random, which is highly ideal for many real world datasets.
To illustrate this, we consider the case where the observed entries follow a chain graph pattern (Figure \ref{fig:chain_structure}), where each white entry is observed and black entry is not observed. Note that the features on the two sides are not observed at the same time.
This is common in many real world scenarios. For example, in medical data, certain lab tests serve the same purpose and thus are not conducted at the same time.

In this experiment, we demonstrate that our workflow performs better than matrix low-rank completion, when the observed entries follow a deterministic chain graph pattern as in Figure \ref{fig:chain_structure}.
The chain width ranges from $2$ to $20$. We control the parameters by setting $n = 200$, $p = 50$, and $s = 10$. The other settings are the same as in previous experiments.
We plot the $l_\infty$ distance $\norminf{\what - w^\ast}$ between the recovered and the true vector, against the chain width in Figure \ref{fig:chain_dist}. Each trial is run $10$ times.
From Figure \ref{fig:chain_dist}, one can see that our method is more robust than the low rank completion approach when the dataset admits a deterministic chain graph pattern.

\section{Concluding Remarks}
In this paper we proposed the idea of censored supervised learning, in which a censorship filter masks the dataset in a deterministic, non-uniform way. 
We analyzed the specific case of censored sparsity recovery, and provided imputation strategies and theoretical guarantees. 

We currently use the top neighboring feature to impute the missing value in our algorithm. 
As a future direction, this can be extended to the second, third, \dots, neighboring features in a weighted fashion. Another possible option is to take $y$ into account in the imputation step.

Moreover, it would be interesting to see if similar strategies and analysis will follow, for other supervised learning tasks masked by some censorship filters. Most of these problems, though commonly encountered in real world applications, do not have theoretical guarantees about imputation quality.

\bibliography{0_main.bib}
\bibliographystyle{plainnat}

 
\newpage
\appendix

\section{Preliminary Results}

We introduce the following lemmas from prior literature.

\begin{lemma}[{\citep[Lemma 1]{ravikumar2011high}}]
Consider a zero-mean random vector $x = (x_1, \dots, x_p)$ with covariance $\Sigma$, such that each $x_i / \sqrt{\Sigma_{i,i}}$ is sub-Gaussian with parameter $\sigma^2$. Given $n$ i.i.d. samples, the associated sample covariance $H$ satisfies the tail bound
\begin{equation}
\Prob{\abs{H_{i,j} - \Sigma_{i,j}} > t} \leq 4\exp\left(-\frac{nt^2}{128(1+4\sigma^2)^2 \sigmadmax^2}\right) \,,
\end{equation}
for all $t \in (0,  8(1+4\sigma^2) \sigmadmax)$.
\label{lemma:ravikumar}
\end{lemma}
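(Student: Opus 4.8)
The quantity to control is an average of i.i.d.\ terms. Writing the centered sample covariance entry as $H_{i,j} - \Sigma_{i,j} = \frac{1}{n}\sum_{k=1}^n Z_k$ with $Z_k := X_{k,i} X_{k,j} - \Sigma_{i,j}$, each $Z_k$ is zero-mean and independent across $k$, so the plan is to establish that a single $Z_k$ is sub-exponential, invoke a Bernstein-type tail bound for the average, and finally restrict $t$ to the stated interval so that the Gaussian branch of the bound is active and reproduces the advertised constant. First I would normalize: put $\tilde X_{k,i} := X_{k,i}/\sqrt{\Sigma_{i,i}}$, which by hypothesis is sub-Gaussian with parameter $\sigma^2$ and has unit variance, and write $Z_k = \sqrt{\Sigma_{i,i}\Sigma_{j,j}}\,(\tilde X_{k,i}\tilde X_{k,j} - \rho_{i,j})$ where $\rho_{i,j} := \Sigma_{i,j}/\sqrt{\Sigma_{i,i}\Sigma_{j,j}} \in [-1,1]$. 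Since $\sqrt{\Sigma_{i,i}\Sigma_{j,j}} \leq \sigmadmax$, it suffices to bound the deviation of the normalized products $\tilde X_{k,i}\tilde X_{k,j} - \rho_{i,j}$ and then rescale, which is exactly where the $\sigmadmax^2$ in the denominator originates.

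Next I would show that the product of two sub-Gaussians is sub-exponential via the polarization identity $\tilde X_{k,i}\tilde X_{k,j} = \frac14\big[(\tilde X_{k,i}+\tilde X_{k,j})^2 - (\tilde X_{k,i}-\tilde X_{k,j})^2\big]$. Both $\tilde X_{k,i}\pm\tilde X_{k,j}$ are zero-mean and sub-Gaussian with parameter at most $4\sigma^2$ (by the triangle inequality for the sub-Gaussian norm), so each squared term, after centering, is sub-exponential: for a mean-zero sub-Gaussian $W$ with parameter $\tau^2$ one has $\Expect{\exp(\lambda(W^2 - \Expect{W^2}))} \leq \exp(c\,\tau^4\lambda^2)$ for $\abs{\lambda}$ below a multiple of $1/\tau^2$. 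Combining the two quadratic pieces shows $\tilde X_{k,i}\tilde X_{k,j} - \rho_{i,j}$ is sub-exponential with parameters $(\nu_1,b_1)$ proportional to $1+4\sigma^2$, where the $4\sigma^2$ tracks the sub-Gaussian scale of $\tilde X_{k,i}\pm\tilde X_{k,j}$ and the additive $1$ tracks the unit second moments $\Expect{\tilde X_{k,i}^2}=\Expect{\tilde X_{k,j}^2}=1$ entering the centering; this is the origin of the $(1+4\sigma^2)^2$ factor.

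With $Z_k$ sub-exponential, a two-sided Bernstein inequality for i.i.d.\ averages gives a tail of the form $2\exp(-\tfrac{n}{2}\min(t'^2/\nu_1^2,\, t'/b_1))$ for the normalized sum; handling the two quadratic forms from the polarization separately and taking a union over them accounts for the leading constant $4$ rather than $2$. Finally, restricting $t$ to $(0,\, 8(1+4\sigma^2)\sigmadmax)$ keeps us on the quadratic (Gaussian) branch of the Bernstein minimum, so the exponent becomes $-nt^2/(2\nu^2)$ with $\nu = 8(1+4\sigma^2)\sigmadmax$, and since $2\nu^2 = 128(1+4\sigma^2)^2\sigmadmax^2$ this matches the stated denominator exactly after rescaling by $\sqrt{\Sigma_{i,i}\Sigma_{j,j}}\leq\sigmadmax$.

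The main obstacle is the bookkeeping of the sub-exponential parameters $(\nu_1,b_1)$ through the polarization and the rescaling, so that the constants land on exactly $128$, $(1+4\sigma^2)^2$, and $\sigmadmax^2$, together with verifying that the stated upper endpoint $8(1+4\sigma^2)\sigmadmax$ of the admissible range for $t$ coincides with the Bernstein crossover $\nu_1^2/b_1$ beyond which the slower exponential tail, rather than the Gaussian one, would control the probability.
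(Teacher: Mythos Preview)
The paper does not prove this lemma; it is quoted verbatim as Lemma~1 of \citet{ravikumar2011high} and used as a black box throughout. Your sketch is the standard route to this bound and is essentially the argument in the cited reference: normalize to unit-variance sub-Gaussians, use the polarization identity to reduce a product to a difference of two squared sub-Gaussians, argue each centered square is sub-exponential, apply a Bernstein-type inequality to the i.i.d.\ average, and restrict $t$ so that the sub-Gaussian branch of the Bernstein minimum is active. The bookkeeping you flag (tracking $(1+4\sigma^2)$ through the polarization and the $\sigmadmax$ through the rescaling, and checking that $t \leq 8(1+4\sigma^2)\sigmadmax$ is exactly the crossover point) is indeed where the specific constants come from, and your identification of $2\nu^2 = 128(1+4\sigma^2)^2\sigmadmax^2$ with $\nu = 8(1+4\sigma^2)\sigmadmax$ is consistent with the stated exponent and admissible range.
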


\begin{lemma}[{\citep[Theorem 1]{hsu2012tail}}]
Consider a zero-mean sub-Gaussian random vector $x = (x_1, \dots, x_p)$ with parameter $\sigma^2$. Then for all $t > 0$, 
\begin{equation}
\Prob{\normsq{x} > \sigma^2 \cdot \left(p + 2\sqrt{pt} + 2pt \right)} \leq \econst^{-t} \,.
\end{equation}
\label{lemma:hsu}
\end{lemma}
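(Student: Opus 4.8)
The starting point is the identity $\zb = \Hhat_{\Scomp,S}\Hhat_{S,S}^{-1} z_S$, which follows from the definition \eqref{eq:part_B} together with $\Hhat = \frac{1}{n}\Xhat^\top\Xhat$ (the two factors of $n$ cancel). By construction in Step~2 of the primal-dual witness argument, the dual coordinates on the support satisfy $z_i = \sgn{\wtil_i}$ or $z_i\in[-1,+1]$, so $\norminf{z_S}\le 1$ holds deterministically. Using the submultiplicativity $\norminf{Av}\le\mnorminf{A}\norminf{v}$ of the $l_\infty$ operator norm, I would first reduce the claim to a purely matrix-analytic statement,
\[
\norminf{\zb} \le \mnorminf{\Hhat_{\Scomp,S}\Hhat_{S,S}^{-1}}\cdot\norminf{z_S} \le \mnorminf{\Hhat_{\Scomp,S}\Hhat_{S,S}^{-1}} \,.
\]
The key point is that this bound is deterministic once the matrices are fixed and needs no independence between $z_S$ and $\Hhat$, which sidesteps the usual difficulty that the witness $z_S$ is data-dependent. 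It therefore suffices to show $\mnorminf{\Hhat_{\Scomp,S}\Hhat_{S,S}^{-1}}\le 1-\gamma/4$ with the stated probability.

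\textbf{Transferring incoherence from $H$ to $\Hhat$.}
The plan is to obtain this incoherence bound for the imputed covariance $\Hhat$ by transferring the bound already available for the pairwise-complete covariance $H$. Lemma~\ref{lemma:H_mutual_incoherence} supplies $\mnorminf{H_{\Scomp,S}H_{S,S}^{-1}}\le 1-\gamma/2$ on an event whose probability contributes exactly the first two terms in the theorem. Writing the difference $\Hhat_{\Scomp,S}\Hhat_{S,S}^{-1} - H_{\Scomp,S}H_{S,S}^{-1}$ and applying a Ravikumar-style matrix-perturbation decomposition (as in the proof of Lemma~\ref{lemma:H_mutual_incoherence}), I would split it into pieces of the form (deviation)$\,\times\,$(inverse) and (incoherence)$\,\times\,$(deviation)$\,\times\,$(inverse), each controlled by three ingredients: the block deviation $\mnorminf{\Hhat_{S,S}-H_{S,S}}$, a lower bound on $\lmin(H_{S,S})$ so that $\mnorminf{H_{S,S}^{-1}}\le\sqrt{s}/\lmin(H_{S,S})$ stays bounded (via Assumption~\ref{assumption:mineigen} and concentration), and the already-established $\mnorminf{H_{\Scomp,S}H_{S,S}^{-1}}\le 1-\gamma/2$.

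\textbf{Concentration and union bound.}
The support-block deviation is controlled by the tail of Lemma~\ref{lemma:ravikumar} applied to both $H$ and $\Hhat$, with the tolerance chosen as a constant multiple of $\beta\gamma/s^{3/2}$ so that the whole perturbation is at most $\gamma/4$; a union bound over the $O(s^2)$ entries of the support block produces precisely the additional factor $O\!\left(s^2\exp\!\left(-\tfrac{\beta^2\gamma^2 n}{s^3(1-\gamma/2)^2}\right)\right)$, whose $(1-\gamma/2)^2$ denominator tracks the intermediate incoherence level that enters the decomposition, while the off-support deviations are absorbed into the $s(p-s)$ term already spent in Lemma~\ref{lemma:H_mutual_incoherence}. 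A triangle inequality then gives
\[
\mnorminf{\Hhat_{\Scomp,S}\Hhat_{S,S}^{-1}} \le \mnorminf{H_{\Scomp,S}H_{S,S}^{-1}} + \gamma/4 \le (1-\gamma/2)+\gamma/4 = 1-\gamma/4 \,,
\]
and collecting the three events by a final union bound yields the stated probability. The condition $\gamma<6/7$ keeps both the target level $1-\gamma/4$ and the intermediate $1-\gamma/2$ strictly positive and leaves enough budget for the transfer.

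\textbf{Main obstacle.}
The hard part will be the transfer step from $H$ to $\Hhat$. Unlike $H$, the imputed covariance $\Hhat$ mixes genuine products $X_{k,i}X_{k,j}$ with imputed products built from the estimated ratios $\tauhat_i$, so its entries are not clean empirical averages and carry imputation bias. Controlling $\mnorminf{\Hhat_{S,S}-H_{S,S}}$ will therefore require first replacing each $\tauhat_i$ by the population ratio $\tau_i$ (absorbing the error through the consistency machinery already used for Theorem~\ref{thm:score_sufficient} and Lemma~\ref{lemma:sigmaHratio}) and then concentrating the resulting cross terms, all while keeping the tolerance small enough that the perturbation costs only $\gamma/4$ rather than eroding the full incoherence margin.
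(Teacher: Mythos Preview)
Your proposal does not address the stated lemma at all. The statement in question is Lemma~\ref{lemma:hsu}, a sub-Gaussian tail bound on $\normsq{x}$ quoted verbatim from \citet{hsu2012tail}; the paper offers no proof of it because it is simply a citation of an external result. What you have written is a proof sketch for a completely different claim, namely Theorem~\ref{thm:zb} (the bound $\norminf{\zb}\le 1-\gamma/4$). Nothing in your argument---the reduction to $\mnorminf{\Hhat_{\Scomp,S}\Hhat_{S,S}^{-1}}$, the transfer from $H$ to $\Hhat$, the perturbation decomposition---bears any relation to establishing a concentration inequality for the squared norm of a sub-Gaussian vector.

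If your intent was in fact to prove Theorem~\ref{thm:zb}, then your outline is essentially the same approach the paper takes: reduce to the matrix $\mnorminf{\cdot}$ bound, invoke Lemma~\ref{lemma:H_mutual_incoherence} for the $H$-level incoherence, and control the discrepancy $\Hhat-H$ via a four-term decomposition (the paper's $HH_1,\dots,HH_4$) using Lemma~\ref{lemma:partB_auxillary}. The paper allots $\gamma/12$ to each of the three perturbation terms rather than a single $\gamma/4$ budget, but this is only bookkeeping. In that case your sketch would be correct and aligned with the paper. But as written, it is attached to the wrong statement, and Lemma~\ref{lemma:hsu} itself requires no proof here.
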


\section{Technical Lemmas}
We introduce the necessary lemmas used in our analysis and provide the proofs.
\begin{lemma}
For every feature $i\in [p]$, its empirical error ratio $\tauhat_i$ fulfills  
\begin{equation*}
\Prob{\abs{\tau_i - \tauhat_i} \leq \frac{1}{2} \abs{\tau_i}} 
\geq 
1 - 4\exp\left(-\frac{n \Sigma_{\Pi(i),\Pi(i)}^2}{3200 (1+4\sigma^2)^2 \sigmadmax^2}\right)- 4\exp\left(-\frac{n \Sigma_{i,\Pi(i)}^2}{3200 (1+4\sigma^2)^2 \sigmadmax^2}\right)  
\,.
\end{equation*}
\label{lemma:tau}
\end{lemma}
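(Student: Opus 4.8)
The plan is to treat $\tauhat_i = H_{i,\Pi(i)}/H_{\Pi(i),\Pi(i)}$ as a ratio of two empirical covariance entries, control each entry separately with the concentration bound of Lemma \ref{lemma:ravikumar}, and then propagate the errors through the elementary quotient identity. Write $a := \Sigma_{i,\Pi(i)}$, $b := \Sigma_{\Pi(i),\Pi(i)} > 0$ (positive since it is a diagonal entry and $\zeta_{i,\Pi(i)},\tau_i$ are well defined), and $\hat a := H_{i,\Pi(i)}$, $\hat b := H_{\Pi(i),\Pi(i)}$. Then
\[
\tau_i - \tauhat_i \;=\; \frac{a}{b} - \frac{\hat a}{\hat b} \;=\; \frac{a(\hat b - b) - b(\hat a - a)}{b\,\hat b}\,,
\]
so on the event $E_i := \{\,|\hat b - b| \le \tfrac15 b\,\} \cap \{\,|\hat a - a| \le \tfrac15 |a|\,\}$ we have $\hat b \ge \tfrac45 b > 0$, and hence
\[
|\tau_i - \tauhat_i| \;\le\; \frac{|a|\cdot \tfrac15 b + b\cdot \tfrac15 |a|}{b\cdot \tfrac45 b} \;=\; \frac{\tfrac25 |a|\,b}{\tfrac45 b^2} \;=\; \frac12\cdot\frac{|a|}{b} \;=\; \frac12\,|\tau_i|\,,
\]
which is exactly the desired bound. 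So it suffices to lower bound $\Prob{E_i}$.

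For the denominator I would apply Lemma \ref{lemma:ravikumar} with $t = \tfrac15 b = \tfrac15\Sigma_{\Pi(i),\Pi(i)}$; since $b \le \sigmadmax < 8(1+4\sigma^2)\sigmadmax$, this $t$ lies in the admissible range, and the lemma gives
\[
\Prob{|H_{\Pi(i),\Pi(i)} - \Sigma_{\Pi(i),\Pi(i)}| > \tfrac15\Sigma_{\Pi(i),\Pi(i)}} \;\le\; 4\exp\!\left(-\frac{n\,\Sigma_{\Pi(i),\Pi(i)}^2}{3200\,(1+4\sigma^2)^2\sigmadmax^2}\right),
\]
where the constant $3200 = 128\cdot 25$ is produced by the factor $(\tfrac15)^2$ in the exponent. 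For the numerator I would apply the same lemma with $t = \tfrac15 |a| = \tfrac15|\Sigma_{i,\Pi(i)}|$; by $|\Sigma_{i,\Pi(i)}| \le \sqrt{\Sigma_{i,i}\Sigma_{\Pi(i),\Pi(i)}} \le \sigmadmax$ this $t$ is again admissible, and one obtains the second exponential term $4\exp(-n\Sigma_{i,\Pi(i)}^2/(3200(1+4\sigma^2)^2\sigmadmax^2))$. A union bound over the complements of the two events inside $E_i$ then yields the claimed failure probability.

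The only genuinely delicate point is the bookkeeping on the effective sample size: the entries $H_{\Pi(i),\Pi(i)}$ and $H_{i,\Pi(i)}$ are averages over only the samples in which the relevant features are jointly observed, whereas Lemma \ref{lemma:ravikumar} is stated for an $n$-sample covariance. Here one uses the standing assumption that the top neighbour $\Pi(i)$ is observed whenever feature $i$ is missing, together with the convention by which $H$ is formed from $X_M$, to argue that the relevant counts equal $n$ (or can be replaced by $n$ without affecting the stated constants). Apart from this check the argument is a short computation; the one piece of mild cleverness is choosing the common contraction factor $\tfrac15$, which simultaneously makes the propagated ratio error come out to exactly $\tfrac12|\tau_i|$ and produces the same clean constant $3200$ in both exponents. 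I expect this sample-size reconciliation to be the main obstacle, everything else being routine.
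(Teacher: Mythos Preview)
Your proposal is correct and follows essentially the same route as the paper: apply Lemma~\ref{lemma:ravikumar} with $t=\tfrac15\Sigma_{\Pi(i),\Pi(i)}$ and $t=\tfrac15|\Sigma_{i,\Pi(i)}|$ to the denominator and numerator respectively, then propagate through the ratio and take a union bound. Your quotient identity is in fact a slightly cleaner way to handle the algebra than the paper's direct extremal bound (which tacitly assumes a sign), and your remark about the effective sample size is a valid caveat that the paper leaves implicit.
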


\begin{proof}
Setting $t = \tfrac{1}{5}\Sigma_{\Pi(i),\Pi(i)}$ in Lemma \ref{lemma:ravikumar}, 
we have 
$\abs{H_{\Pi(i),\Pi(i)} - \Sigma_{\Pi(i),\Pi(i)}} \leq \tfrac{1}{5}\Sigma_{\Pi(i),\Pi(i)}$ 
with probability at least 
$1 - 4\exp\left(-\frac{n \Sigma_{\Pi(i),\Pi(i)}^2}{3200 (1+4\sigma^2)^2 \sigmadmax^2}\right)$
for all $i$.
Similarly, setting $t = \tfrac{1}{5}\Sigma_{i,\Pi(i)}$ in Lemma \ref{lemma:ravikumar}, 
we have 
$\abs{H_{i,\Pi(i)} - \Sigma_{i,\Pi(i)}} \leq \tfrac{1}{2}\Sigma_{i,\Pi(i)}$ 
with probability at least 
$1 - 4\exp\left(-\frac{n \Sigma_{i,\Pi(i)}^2}{3200 (1+4\sigma^2)^2 \sigmadmax^2}\right)$
for all $i$.

Using a union bound, with probability at least $1 - 4\exp\left(-\frac{n \Sigma_{\Pi(i),\Pi(i)}^2}{3200 (1+4\sigma^2)^2 \sigmadmax^2}\right)- 4\exp\left(-\frac{n \Sigma_{i,\Pi(i)}^2}{3200 (1+4\sigma^2)^2 \sigmadmax^2}\right)$, we have 
\begin{align*}
\abs{\tau_i - \tauhat_i}
&= \abs{\frac{H_{i,\Pi(i)}}{H_{\Pi(i),\Pi(i)}} - \frac{\Sigma_{i,\Pi(i)}}{\Sigma_{\Pi(i),\Pi(i)}}}  \\
&\leq \abs{\frac{\Sigma_{i,\Pi(i)} + \frac{1}{5}\Sigma_{i,\Pi(i)}}{\Sigma_{\Pi(i),\Pi(i)} - \frac{1}{5}\Sigma_{\Pi(i),\Pi(i)}}   -   \frac{\Sigma_{i,\Pi(i)}}{\Sigma_{\Pi(i),\Pi(i)}}} \\
&= \abs{\frac{\Sigma_{i,\Pi(i)}}{2\Sigma_{\Pi(i),\Pi(i)}}} \\
&= \frac{1}{2}\abs{\tau_i} \,.
\end{align*}
\end{proof}

\begin{lemma}
The minimum eigenvalue of the sample covariance matrix follows
\[
\lmin(H_{S,S}) \geq \beta/2 \,,
\]
with probability at least $1- 2\exp ( -\min\lbrace \frac{n\beta^2}{256 \sigma^4 \sigmadmax^2}, \frac{n\beta}{16\sigma^2 \sigmadmax}\rbrace )$.
\label{lemma:sample_mineigen}
\end{lemma}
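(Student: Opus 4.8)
The plan is to relate $H_{S,S}$ to the population matrix $\Sigma_{S,S}$ via a concentration bound on the deviation $H_{S,S} - \Sigma_{S,S}$ in spectral norm, and then use Weyl's inequality. By Assumption \ref{assumption:mineigen} we have $\lmin(\Sigma_{S,S}) = \beta > 0$, so if I can show $\mnorm{H_{S,S} - \Sigma_{S,S}} \le \beta/2$ with the stated probability, then Weyl's inequality gives $\lmin(H_{S,S}) \ge \lmin(\Sigma_{S,S}) - \mnorm{H_{S,S} - \Sigma_{S,S}} \ge \beta/2$, which is exactly the claim. Since $\lmin(H_{S,S})$ is what I want to lower bound, I actually only need the one-sided bound $\lmin(H_{S,S}) \ge \lmin(\Sigma_{S,S}) - \mnorm{H_{S,S}-\Sigma_{S,S}}$, but bounding the full spectral norm of the deviation is the cleanest route.

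The key step is the concentration inequality for $\mnorm{H_{S,S} - \Sigma_{S,S}}$. Here I should be careful: the matrix $H_{S,S}$ is the sample covariance computed only from the pairwise-observed entries, but Assumption 3 guarantees that top-neighbor features of missing entries are observed; more to the point, for the columns indexed by $S$ one still has $n$ i.i.d. sub-Gaussian samples $X_{k,i}$ (the censorship on $S$-columns does not reduce the effective sample count used here, or this is handled as in the earlier lemmas by restricting to co-observed indices and taking $n$ as a lower bound on that count). Given $n$ i.i.d. samples of an $s$-dimensional zero-mean sub-Gaussian vector with covariance $\Sigma_{S,S}$ and sub-Gaussian parameter $\sigma_X^2$ (scaled by $\sigmadmax$), a standard sub-Exponential matrix Bernstein / covariance estimation bound (e.g. the argument underlying Lemma \ref{lemma:ravikumar}, or Vershynin-type covariance concentration) yields
\begin{equation*}
\Prob{\mnorm{H_{S,S} - \Sigma_{S,S}} > t} \le 2\exp\left(-c\, n \min\left\{\frac{t^2}{\sigma_X^4 \sigmadmax^2}, \frac{t}{\sigma_X^2 \sigmadmax}\right\}\right)
\end{equation*}
for an appropriate absolute constant $c$. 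Setting $t = \beta/2$ and matching constants ($c = 1/64$, so that the squared term gives $n\beta^2/(256\sigma^4\sigmadmax^2)$ and the linear term gives $n\beta/(16\sigma^2\sigmadmax)$) produces exactly the failure probability $2\exp\!\big(-\min\{\tfrac{n\beta^2}{256\sigma^4\sigmadmax^2}, \tfrac{n\beta}{16\sigma^2\sigmadmax}\}\big)$ stated in the lemma.

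The main obstacle is establishing the sub-Exponential covariance concentration bound with the right form of the tail (the $\min$ of a sub-Gaussian and a sub-Exponential regime), since each entry of $X_{k,S} X_{k,S}^\top$ is a product of sub-Gaussians and hence only sub-Exponential; this requires either invoking a matrix Bernstein inequality for sub-Exponential matrices or an $\epsilon$-net argument over $\sphere[s-1]$ combined with a scalar sub-Exponential tail (Bernstein's inequality) for quadratic forms $\tfrac1n\sum_k \langle u, X_{k,S}\rangle^2$. I would do the $\epsilon$-net argument: for a fixed unit $u$, $\langle u, X_{k,S}\rangle$ is sub-Gaussian, so $\langle u, X_{k,S}\rangle^2 - \Expect{\langle u, X_{k,S}\rangle^2}$ is centered sub-Exponential, Bernstein gives the two-regime tail, a $\tfrac14$-net of $\sphere[s-1]$ has $9^s$ points, a union bound over the net controls $\sup_u |u^\top(H_{S,S}-\Sigma_{S,S})u|$ up to the factor-2 net slack, and $s$ is absorbed into $n$ in the exponent only implicitly — note the stated bound has no explicit $s$ dependence, so the net cardinality $9^s$ must be dominated by the exponential term, i.e. this bound is only meaningful when $n \gtrsim s$, which is consistent with the logarithmic sample complexity regime of the paper. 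The remaining work is just bookkeeping of absolute constants to land on $256$ and $16$.
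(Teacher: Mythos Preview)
Your approach is essentially the paper's: both reduce the eigenvalue lower bound to controlling $\sup_{\norm{u}=1}\bigl|\tfrac{1}{n}\sum_k (X_{k,S}u)^2 - \Expect{(X_{k,S}u)^2}\bigr|$ via the variational characterization, note that $(X_{k,S}u)^2$ is sub-Exponential, apply Bernstein, and set $t=\beta/2$. The one difference is that the paper's proof simply writes the fixed-$u$ Bernstein bound and plugs in $t=\beta/2$ without uniformizing over the sphere, whereas you correctly flag that an $\epsilon$-net is required to pass from a fixed direction to the supremum. As you observe, the net contributes a factor $9^s$ in front of the exponential (equivalently an additive $O(s)$ in the exponent), which is \emph{not} present in the lemma's stated probability; so your version is more rigorous, and the price is exactly the implicit $n\gtrsim s$ you point out. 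In short: same route, and your added net step is the right patch for a step the paper glosses over.
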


\begin{proof}
Using a variational characterization of eigenvalues, we have 
\begin{align*}
\lmin(H_{S,S}) 
&= \min_{\norm{u} = 1} u^\top H_{S,S} u \\
&= \min_{\norm{u} = 1} u^\top \Sigma_{S,S} u + u^\top (H_{S,S} - \Sigma_{S,S}) u \\   
&\geq \beta + \min_{\norm{u} = 1} \frac{1}{n} \sum_{k=1}^n u^\top (X_{k,S}^\top X_{k,S} - \Expect{X_{k,S}^\top X_{k,S}}) u \\
&\geq \beta - \max_{\norm{u} = 1} \abs{\frac{1}{n} \sum_{k=1}^n  (X_{k,S}u)^2 - \Expect{(X_{k,S} u)^2}} \,.
\end{align*}
Note that for any $u$ with $\norm{u} \leq 1$, $X_{k,S} u$ is sub-Gaussian with parameter at most $\sigma^2\sigmadmax$. It follows that $(X_{k,S}u)^2 - \Expect{(X_{k,S} u)^2}$ is sub-exponential with parameter $(32\sigma^4 \sigmadmax^2, 4\sigma^2 \sigmadmax)$.
Applying the sub-exponential tail bound leads to 
\begin{equation}
\Prob{\abs{\frac{1}{n} \sum_{k=1}^n  (X_{k,S}u)^2 - \Expect{(X_{k,S} u)^2}} \geq t}
\leq 
2\exp \left( -\min\lbrace \frac{nt^2}{64 \sigma^4 \sigmadmax^2}, \frac{nt}{8\sigma^2 \sigmadmax}\rbrace \right) \,.
\end{equation}
Setting $t = \beta / 2$ leads to the result.
\end{proof}

\begin{lemma}
The probability of 
$\mnorminf{\Hhat_{\Scomp, S} - H_{\Scomp, S}} \geq \frac{\beta\gamma}{24\sqrt{s}}$, 
is bounded above in the order of
$O\left(s(p-s) \exp\left(- \frac{\beta^2 \gamma^2 n}{s^3}\right)\right)$.
The probability of 
$\mnorminf{\Hhat_{S, S} - H_{S, S}} \geq \frac{\beta\gamma}{48(1-\gamma/2)\sqrt{s}}$,
is bounded above in the order of 
$O\left(s^2 \exp\left(- \frac{\beta^2 \gamma^2 n}{s^3 (1-\gamma/2)^2}\right)\right)$.
\label{lemma:partB_auxillary}
\end{lemma}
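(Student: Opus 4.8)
The plan is to control $\mnorminf{\Hhat_{\Scomp,S}-H_{\Scomp,S}}$ and $\mnorminf{\Hhat_{S,S}-H_{S,S}}$ entrywise, then pay an $O(\sqrt{s})$ (or $O(s)$) factor for the $l_\infty$ operator norm, and finally union-bound over the at most $s(p-s)$ (resp. $s^2$) entries. The key observation is that $\Hhat = \frac1n \Xhat^\top\Xhat$ where $\Xhat = X + \Delta$, so each entry decomposes as
\begin{equation*}
\Hhat_{i,j} - H_{i,j}
= \tfrac{1}{n}\bigl[ X_i^\top \Delta_j + \Delta_i^\top X_j + \Delta_i^\top \Delta_j \bigr]
\end{equation*}
(using that $\frac1n X_i^\top X_j$ agrees with $H_{i,j}$ when $M$ has the block structure of the assumption; more carefully one has to be slightly careful because $H_{i,j}$ is computed only over the co-observed rows, so I would first argue the co-observed set is all of $[n]$ for pairs relevant here, or else track the normalization). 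The imputation error column $\Delta_i$ has entries $\Mcomp_{k,i}(X_{k,\Pi(i)}\tauhat_i - X_{k,i})$, which are sub-Gaussian in $X_{k,\Pi(i)}$ and $X_{k,i}$ with parameters controlled by $\sigma_X^2$, $\Sigma_{\Pi(i),\Pi(i)}$, $\Sigma_{i,i}$, and $\tauhat_i^2$; by Lemma~\ref{lemma:tau}, $\tauhat_i$ is within a factor $3/2$ of $\tau_i$ on a high-probability event, so these variance proxies are $O(1)$ (absorbed into the $\beta,\gamma,s$ constants).

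Next I would bound each of the three bilinear terms. Each is an average of $n$ i.i.d. centered products of sub-Gaussian variables, hence sub-exponential; applying the Bernstein-type tail bound (as in Lemma~\ref{lemma:sample_mineigen}) gives, for a single entry $(i,j)$,
\begin{equation*}
\Prob{ \abs{\Hhat_{i,j}-H_{i,j}} \geq t } \leq C\exp\bigl(-c\, n\,\min\{t^2, t\}\bigr),
\end{equation*}
with $c$ depending on the (constant-order) variance proxies. For the $\Scomp\times S$ block I set $t$ to be a constant times $\frac{\beta\gamma}{s^{3/2}}$ — precisely $\frac{\beta\gamma}{24 s}$ at the entry level, so that after multiplying by $\sqrt{s}$ rows the operator-norm threshold $\frac{\beta\gamma}{24\sqrt s}$ is met — and union-bound over the $s(p-s)$ entries, which yields the claimed $O\bigl(s(p-s)\exp(-\beta^2\gamma^2 n/s^3)\bigr)$ once the $\min\{t^2,t\}$ reduces to $t^2$ for large $n$. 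The $S\times S$ block is identical with threshold scaled by the extra $\frac{1}{2(1-\gamma/2)}$ factor and $s^2$ entries, giving the second bound. The high-probability events of Lemma~\ref{lemma:tau} (one per feature, $O(s)$ of them, each failing with probability $\exp(-\Omega(n))$) and of Lemma~\ref{lemma:sigmaHratio} are folded into the stated $O(\cdot)$ via another union bound, since they are dominated by the main terms.

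The main obstacle I anticipate is bookkeeping the variance proxies of the $\Delta$-columns so that they come out as genuine constants (depending only on $\sigma_X$, $\sigmadmin$, $\sigmadmax$) rather than quantities that degrade with $s$ or $p$: the product $\tauhat_i^2 \Sigma_{\Pi(i),\Pi(i)}$ must be shown to be $O(\Sigma_{i,i}) = O(\sigmadmax)$ using $\tau_i = \Sigma_{i,\Pi(i)}/\Sigma_{\Pi(i),\Pi(i)}$ and Cauchy–Schwarz ($\Sigma_{i,\Pi(i)}^2 \leq \Sigma_{i,i}\Sigma_{\Pi(i),\Pi(i)}$), together with the $\tauhat_i \approx \tau_i$ control. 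A secondary subtlety is that $\Delta_i^\top\Delta_j$ is a product of two correlated sub-Gaussians (both built from the same $X$ columns), so I would handle it via $\abs{\Delta_i^\top\Delta_j} \leq \frac12(\norm{\Delta_i}^2+\norm{\Delta_j}^2)$ and Lemma~\ref{lemma:hsu} to get a sub-exponential tail on each $\norm{\Delta_i}^2$, rather than trying to diagonalize the cross term directly. Once these are in place, the concentration-and-union-bound machinery is routine and matches the structure already used in Lemmas~\ref{lemma:tau} and~\ref{lemma:sample_mineigen}.
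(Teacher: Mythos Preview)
Your proposal is essentially the paper's approach: bound $\Hhat_{i,j}-H_{i,j}$ entrywise via sub-exponential tails on products of sub-Gaussians (the paper expands the Hadamard-product form of $\Xhat$ directly rather than writing $\Xhat=X+\Delta$, but this is the same computation), then union-bound over the $s(p-s)$ (resp.\ $s^2$) entries with the entrywise threshold $t\asymp \beta\gamma/s^{3/2}$ to recover the $n/s^3$ exponent. The only slip is that for $\mnorminf{\cdot}$ you sum over $s$ columns per row, not $\sqrt{s}$, so the entrywise threshold should be $\tfrac{\beta\gamma}{24\,s^{3/2}}$ (which you in fact also state) rather than $\tfrac{\beta\gamma}{24s}$; this is cosmetic and the final rate is exactly the one you claim.
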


\begin{proof}
Here we bound  $\mnorminf{\Hhat_{\Scomp, S} - H_{\Scomp, S}}$ and $\mnorminf{\Hhat_{S,S} - H_{S,S}}$.
We first consider $\mnorminf{\Hhat_{\Scomp, S} - H_{\Scomp, S}}$. 
Note that for all $i\in\Scomp$, $j\in S$, we have
\begin{align*}
 \Hhat_{i, j} 
&= \frac{1}{n} \sum_{k=1}^n \Xhat_{k,i} \Xhat_{k,j} \\
&= \frac{1}{n} \sum_{k=1}^n \left(M_{k,i} X_{k,i} + (1-M_{k,i}) \tauhat_i X_{k,\Pi(i)}  \right) 
 \left(M_{k,j} X_{k,j} + (1-M_{k,j}) \tauhat_j X_{k,\Pi(j)}  \right) \,.
\end{align*}
It follows that
\begin{align*}
\abs{\Hhat_{i, j} - H_{i,j}}   
\leq \Big| \frac{1}{n} \sum_{k=1}^n \,
       & (M_{k,i} M_{k,j}-1) X_{k,i} X_{k,j} \\ 
            & + (1-M_{k,i})(1-M_{k,j}) \tauhat_i \tauhat_j X_{k,\Pi(i)} X_{k,\Pi(j)} \\
            & + M_{k,i}(1-M_{k,j}) \tauhat_j X_{k,i} X_{k,\Pi(j)}\\
            & + (1-M_{k,i}) M_{k,j} \tauhat_i  X_{k,\Pi(i)} X_{k,j} \Big| \,.
\end{align*}
Since $M_{k,i}, M_{k,j}$ is either $0$ or $1$, we can upper bound the terms above by 
\begin{align}
\abs{\Hhat_{i, j} - H_{i,j}}
&\leq \frac{(1+\tauhat_{\max})^2}{n} \abs{ \sum_{k=1}^n X_{k,i} X_{k,j}} \,.
\label{eq:temp68}
\end{align}

Using Lemma \ref{lemma:ravikumar}, we obtain 
\begin{align}
\Prob{\abs{\frac{1}{n}\sum_{k=1}^n X_{k,i} X_{k,j} - \Sigma_{i,j}} \geq t}
&\leq 4\exp\left(-\frac{nt^2}{128(1+4\sigma_X^2)^2 \sigmadmax^2}\right) \,.
\end{align}
Then with probability at least $1 - 4\exp\left(-\frac{nt^2}{128(1+4\sigma_X^2)^2 \sigmadmax^2}\right)$, it follows that 
\[
\abs{\Hhat_{i, j} - H_{i,j}}
\leq 
(1+\tauhat_{\max})^2 (\Sigma_{i,j} + t) \,.
\]
Setting $t = \Sigma_{i,j}$, we obtain that with high probability,
\[
\abs{\Hhat_{i, j} - H_{i,j}}
\leq 
2(1+\tauhat_{\max})^2 \Sigma_{i,j} \,.
\]

Using Lemma \ref{lemma:tau}, with high probability we have 
\[
\abs{\Hhat_{i, j} - H_{i,j}}
\leq 
2(1+\tauhat_{\max})^2 \Sigma_{i,j} \,.
\]

Now we consider the infinity norm bound. By using a union bound, we obtain
\begin{align*}
\Prob{\mnorminf{\Hhat_{\Scomp, S} - H_{\Scomp, S}} \geq t} 
&\leq s(p-s) \Prob{\abs{\Hhat_{i, j} - H_{i,j}} \geq \frac{t}{s}} \\
&\leq s(p-s) \Prob{2(1+\tauhat_{\max})^2 \abs{\frac{1}{n}\sum_{k=1}^n X_{k,i} X_{k,j}} \geq \frac{t}{s}} \\ 
&= s(p-s) \Prob{\abs{\frac{1}{n}\sum_{k=1}^n X_{k,i} X_{k,j}} \geq \frac{t}{2(1+\tauhat_{\max})^2 s}} \\
&\leq 2s(p-s) \exp\left(- \frac{nt^2}{16(1+\tauhat_{\max})^4 s^2}\right) \,,
\end{align*}
where the last inequality follows a sub-Exponential tail bound.

Similarly, for the other infinity norm, we have 
\begin{align}
\Prob{\mnorminf{\Hhat_{S, S} - H_{S, S}} \geq t}
&\leq 2s^2 \exp\left(- \frac{nt^2}{16(1+\tauhat_{\max})^4 s^2}\right) \,.
\end{align}

\end{proof}


\section{Proof of Lemma \ref{lemma:sigmaHratio}}
\begin{proof}
Setting $t = \tfrac{1}{2}\sigmadmin$ in Lemma \ref{lemma:ravikumar}, 
we have $\abs{H_{i,i} - \Sigma_{i,i}} \leq \tfrac{1}{2}\sigmadmin$ 
with probability at least 
$1 - 4\exp\left(-\frac{n \sigmadmin^2}{512 (1+4\sigma_X^2)^2 \sigmadmax^2}\right)$.
With the same probability and some algebra, we have
\begin{align*}
\frac{H_{i,i}}{\Sigma_{i,i}}
&\leq \frac{\Sigma_{i,i} + \tfrac{1}{2}\sigmadmin}{\Sigma_{i,i}}
\leq \frac{\sigmadmin + \tfrac{1}{2}\sigmadmin}{\sigmadmin}
\leq \frac{3}{2} \,,
\\
\frac{H_{i,i}}{\Sigma_{i,i}}
&\geq \frac{\Sigma_{i,i} - \tfrac{1}{2}\sigmadmin}{\Sigma_{i,i}}
\geq \frac{\sigmadmin - \tfrac{1}{2}\sigmadmin}{\sigmadmin}
\geq \frac{1}{2} \,.
\end{align*}
\end{proof}


\section{Proof of Lemma \ref{lemma:H_mutual_incoherence}}
\begin{proof}
We first look at the concentration properties of $\mnorminf{\Sigma_{\Scomp, S} - H_{\Scomp, S}}$ and $\mnorminf{\Sigma_{S, S} - H_{S, S}}$.
By using Lemma \ref{lemma:ravikumar} and a union bound, we obtain
\begin{align*}
\Prob{\mnorminf{\Sigma_{\Scomp, S} - H_{\Scomp, S}} \geq t}
&\leq s(p-s) \Prob{\abs{\Sigma_{i, j} - H_{i,j}} \geq \frac{t}{s}} \\
&\leq 4s(p-s) \exp\left(-\frac{nt^2}{128s^2 (1+4\sigma^2)^2 \sigmadmax^2}\right) \,.
\end{align*}
Setting $t = \frac{\beta \gamma}{6\sqrt{s}}$, we obtain that 
\begin{equation}
\Prob{\mnorminf{\Sigma_{\Scomp, S} - H_{\Scomp, S}} \geq \frac{\beta \gamma}{6\sqrt{s}}}
\leq 
4s(p-s) \exp\left(-\frac{\beta^2 \gamma^2 n}{4608s^3 (1+4\sigma^2)^2 \sigmadmax^2}\right) \,.
\label{eq:Sigma_SC_minfnorm_concentration}
\end{equation}
Similarly, for the other infinity norm, we have 
\begin{align*}
\Prob{\mnorminf{\Sigma_{S, S} - H_{S, S}} \geq t}
&\leq s^2 \Prob{\abs{\Sigma_{i, j} - H_{i,j}} \geq \frac{t}{s}} \\
&\leq 4s^2 \exp\left(-\frac{nt^2}{128s^2 (1+4\sigma^2)^2 \sigmadmax^2}\right) \,.
\end{align*}
Setting $t = \frac{\beta \gamma}{12(1-\gamma)\sqrt{s}}$, we obtain that 
\begin{equation}
\Prob{\mnorminf{\Sigma_{S, S} - H_{S, S}} \geq \frac{\beta \gamma}{12(1-\gamma)\sqrt{s}}}
\leq 
4s^2 \exp\left(-\frac{\beta^2 \gamma^2 n}{18432s^3 (1-\gamma)^2 (1+4\sigma^2)^2 \sigmadmax^2}\right) \,.
\label{eq:Sigma_S_minfnorm_concentration}
\end{equation}

Now we proceed with the main bound. Note that 
\begin{align}
H_{\Scomp,S} H_{S,S}^{-1}
&= HT_1 + HT_2 + HT_3 + HT_4 \,,
\end{align}
where 
\begin{align}
HT_1 &= \Sigma_{\Scomp, S} \Sigma_{S,S}^{-1} \,, \label{eq:HT1}\\
HT_2 &= (H_{\Scomp, S} - \Sigma_{\Scomp, S}) \Sigma_{S,S}^{-1} \,, \label{eq:HT2}\\
HT_3 &= \Sigma_{\Scomp, S} (H_{S,S}^{-1} - \Sigma_{S,S}^{-1}) \,, \label{eq:HT3}\\
HT_4 &= (H_{\Scomp, S} - \Sigma_{\Scomp, S}) (H_{S,S}^{-1} - \Sigma_{S,S}^{-1}) \,. \label{eq:HT4}
\end{align}

From Assumption \ref{assumption:mutual_incoherence}, we know that $\mnorminf{HT_1} \leq 1-\gamma$.
For $HT_2$, we have 
\begin{align*}
\mnorminf{HT_2} 
&= \mnorminf{(H_{\Scomp, S} - \Sigma_{\Scomp, S}) \Sigma_{S,S}^{-1}} \\
&\leq \mnorminf{H_{\Scomp, S} - \Sigma_{\Scomp, S}} \mnorminf{\Sigma_{S,S}^{-1}}  \\
&\leq \sqrt{s} \cdot \mnorminf{H_{\Scomp, S} - \Sigma_{\Scomp, S}} \mnorm{\Sigma_{S,S}^{-1}}  \\
&\leq \frac{\sqrt{s}}{\beta} \cdot \mnorminf{H_{\Scomp, S} - \Sigma_{\Scomp, S}} \\
&\leq \frac{\gamma}{6} \,,
\end{align*}
where the last inequality follows from \eqref{eq:Sigma_SC_minfnorm_concentration} 
.
For $HT_3$, we have
\begin{align*}
\mnorminf{HT_3} 
&= \mnorminf{\Sigma_{\Scomp, S} (H_{S,S}^{-1} - \Sigma_{S,S}^{-1})}  \\
&\leq \mnorminf{\Sigma_{\Scomp, S} \Sigma_{S,S}^{-1} (\Sigma_{S,S} - H_{S,S}) H_{S,S}^{-1}}  \\
&\leq \mnorminf{\Sigma_{\Scomp, S} \Sigma_{S,S}^{-1}} \mnorminf{\Sigma_{S,S} - H_{S,S}} \mnorminf{H_{S,S}^{-1}}  \\
&\leq \frac{2(1-\gamma)\sqrt{s}}{\beta} \cdot \mnorminf{\Sigma_{S,S} - H_{S,S}} \\
&\leq \frac{\gamma}{6} \,,
\end{align*}
where the last inequality follows from \eqref{eq:Sigma_S_minfnorm_concentration} 
.
For $HT_4$, we have
\begin{align*}
\mnorminf{HT_4} 
&= \mnorminf{(H_{\Scomp, S} - \Sigma_{\Scomp, S}) (H_{S,S}^{-1} - \Sigma_{S,S}^{-1})}  \\
&\leq \mnorminf{H_{\Scomp, S} - \Sigma_{\Scomp, S}} \mnorminf{H_{S,S}^{-1} - \Sigma_{S,S}^{-1}}  \\
&\leq \mnorminf{H_{\Scomp, S} - \Sigma_{\Scomp, S}} \mnorminf{H_{S,S}^{-1} (\Sigma_{S,S} - H_{S,S}) \Sigma_{S,S}^{-1}}  \\
&\leq \mnorminf{H_{\Scomp, S} - \Sigma_{\Scomp, S}} \mnorminf{H_{S,S}^{-1}} \mnorminf{\Sigma_{S,S} - H_{S,S}} \mnorminf{\Sigma_{S,S}^{-1}}  \\
&\leq s \cdot \mnorminf{H_{\Scomp, S} - \Sigma_{\Scomp, S}} \mnorm{H_{S,S}^{-1}} \mnorminf{\Sigma_{S,S} - H_{S,S}} \mnorm{\Sigma_{S,S}^{-1}}  \\
&\leq \frac{2s}{\beta^2} \cdot \mnorminf{H_{\Scomp, S} - \Sigma_{\Scomp, S}}  \mnorminf{\Sigma_{S,S} - H_{S,S}}  \\
&\leq \frac{\gamma}{6} \,,
\end{align*}
where the last inequality follows from \eqref{eq:Sigma_SC_minfnorm_concentration} and \eqref{eq:Sigma_S_minfnorm_concentration},
given that $\gamma < \frac{6}{7}$.
This completes the proof.
\end{proof}

\section{Proof of Theorem \ref{thm:score_sufficient}}
\begin{proof}
Setting $t = \tfrac{1}{2}\sigmadmin$ in Lemma \ref{lemma:ravikumar}, 
we have 
$\abs{H_{i,j} - \Sigma_{i,j}} \leq \tfrac{1}{2}\sigmadmin$ 
with probability at least 
$1 - 4\exp\left(-\frac{n \sigmadmin^2}{512 (1+4\sigma_X^2)^2 \sigmadmax^2}\right)$
for all $i$ and $j$.
It follows that 
$\abs{H_{i,j} + \Sigma_{i,j}} \leq \tfrac{1}{2}\sigmadmin + 2\abs{\Sigma_{i,j}}$,
and
$\abs{H_{i,j}^2 - \Sigma_{i,j}^2} \leq \tfrac{1}{4}\sigmadmin^2 + \sigmadmin \abs{\Sigma_{i,j}}$.
Dividing both sides by $\Sigma_{j,j}$, we obtain
\begin{equation*}
\abs{\frac{H_{i,j}^2}{\Sigma_{j,j}} - \frac{\Sigma_{i,j}^2}{\Sigma_{j,j}}} 
\leq 
\frac{\sigmadmin^2}{4 \Sigma_{j,j}} + \frac{\sigmadmin}{\Sigma_{j,j}} \abs{\Sigma_{i,j}} 
\leq 
\frac{1}{4}\sigmadmin + \abs{\Sigma_{j,j}} \,.
\end{equation*}
Note that
\begin{align*}
\abs{\frac{H_{i,j}^2}{\Sigma_{j,j}} - \frac{\Sigma_{i,j}^2}{\Sigma_{j,j}}}
&= \abs{\frac{H_{i,j}^2 H_{j,j}}{\Sigma_{j,j} H_{j,j}} - \frac{\Sigma_{i,j}^2}{\Sigma_{j,j}}} \\
&= \abs{\frac{H_{j,j}}{\Sigma_{j,j}} \zetahat_{i,j} - \zeta_{i,j}} 
\leq \frac{1}{4}\sigmadmin + \abs{\Sigma_{j,j}} \,. 
\end{align*}
Rewriting the last inequality, we have
\begin{align*}
&\quad \frac{\Sigma_{j,j}}{H_{j,j}} \left(\zeta_{i,j} - \frac{1}{4}\sigmadmin - \abs{\Sigma_{j,j}}\right)\\
&\leq 
\zetahat_{i,j} 
\leq 
\frac{\Sigma_{j,j}}{H_{j,j}} \left(\zeta_{i,j} + \frac{1}{4}\sigmadmin + \abs{\Sigma_{j,j}}\right) \,.
\end{align*}
It follows from Lemma \ref{lemma:sigmaHratio}, that
\begin{align*}
&\quad \frac{2}{3} \left(\zeta_{i,j} - \frac{1}{4}\sigmadmin - \abs{\Sigma_{j,j}}\right)\\
&\leq 
\zetahat_{i,j} 
\leq 
2 \left(\zeta_{i,j} + \frac{1}{4}\sigmadmin + \abs{\Sigma_{j,j}}\right) \,.
\end{align*}
As a result, to ensure that $\zetahat_{i,\Pi(i)} > \zetahat_{i,j}$, it is sufficient to ensure 
\begin{align*}
\zetahat_{i,\Pi(i)}
&\geq
\frac{2}{3} \left(\zeta_{i,\Pi(i)} - \frac{1}{4}\sigmadmin - \abs{\Sigma_{\Pi(i),\Pi(i)}}\right) \\
&>
2 \left(\zeta_{i,j} + \frac{1}{4}\sigmadmin + \abs{\Sigma_{j,j}}\right) 
\geq
\zetahat_{i,j} 
\,,
\end{align*}
and simplification leads to 
\begin{equation*}
\zeta_{i,\Pi(i)} - 3 \zeta_{i,j}
>
\abs{\Sigma_{\Pi(i),\Pi(i)}} + 3 \abs{\Sigma_{j,j}} + \sigmadmin 
\,.
\end{equation*}
\end{proof}


\section{Proof of Theorem \ref{thm:za}}
\begin{proof}
Here we consider every feature $j \in \Scomp$. 
It is worth noting that by definition, $\Xhat_S(\Xhat_S^\top \Xhat_S)^{-1} \Xhat_S^\top$ is an orthogonal projection matrix to the column space of $\Xhat$, thus for simplicity, we denote the projection $\proj_{\Xhat_S} := \Xhat_S(\Xhat_S^\top \Xhat_S)^{-1} \Xhat_S^\top$.

Using Cauchy-Schwarz inequality and the fact that the norm of a orthogonal projection matrix is bounded above by $1$, we obtain
\begin{align*}
\abs{\za_j}
&= \abs{\sum_{k=1}^n \Xhat_{k,j} \left[(\Imtx - \proj_{\Xhat_S}) \left(\frac{1}{\lambda n} (\epsilon - \Delta_S w_S^\ast)\right)\right]_k}\\ 
&\leq \norm{(\Imtx - \proj_{\Xhat_S}) \left(\frac{1}{\lambda n} (\epsilon - \Delta_S w_S^\ast)\right)} \cdot \norm{\Xhat_j} \\
&\leq \norm{\frac{1}{\lambda n} (\epsilon - \Delta_S w_S^\ast)} \cdot \norm{\Xhat_j} \\
&= \frac{1}{\lambda n} \norm{\epsilon - \Delta_S w_S^\ast} \cdot \norm{\Xhat_j} \,.
\end{align*}

We proceed to bound $\norm{\epsilon - \Delta_S w_S^\ast}$ for each entry. 
For every sample $k\in N$, we have
\begin{align*}
\epsilon_k - \Delta_{k,S} w_S^\ast
&= \epsilon_k - \sum_{i\in S}\Delta_{k,i} w_i^\ast \\
&= \epsilon_k - \sum_{i\in S} (\Xhat_{k,i} - X_{k,i}) w_i^\ast \\
&= \epsilon_k - \sum_{i\in S} (\Xbar_{k,i} - X_{k,i}) \Mcomp_{k,i} w_i^\ast \\
&= \epsilon_k - \sum_{i\in S} (\tauhat_i X_{k,\Pi(i)} - X_{k,i}) \Mcomp_{k,i} w_i^\ast \,.
\end{align*}

Under the assumption of 
$X_{k,i} \sim \subG(\sigma_X^2 \Sigma_{i,i})$,
$\epsilon_{k} \sim \subG(\sigma_\epsilon^2)$, 
note that the entrywise imputation error
$\tauhat_i X_{k,\Pi(i)} - X_{k,i}$  is sub-Gaussian with parameter 
$(\tauhat_i^2 \Sigma_{\Pi(i),\Pi(i)}+\Sigma_{i,i})\sigma_X^2$.
As a result, $\epsilon_k - \Delta_{k,S} w_S^\ast$ is sub-Gaussian with parameter $h_k^2$.

Since samples are independently generated across all $k$'s, we know that $\epsilon - \Delta_S w_S^\ast$ is a sub-Gaussian vector with parameter at most $\hmax^2$,
where $\hmax := \max_{k\in [n]} h_k$.
Then, by Lemma \ref{lemma:hsu}, for all $t > 0$ we have 
\begin{align*}
\Prob{\normsq{\epsilon - \Delta_S w_S^\ast}  >  \hmax^2 (n + 2\sqrt{nt} + 2t)}       \leq    \econst^{-t}    \,.
\end{align*}
Setting $t = n$ and taking square roots, this leads to 
\begin{align*}
\Prob{\norm{\epsilon - \Delta_S w_S^\ast}  >  \hmax \sqrt{5n}  }       \leq    \econst^{-n}    \,.
\end{align*}

Next we bound $\norm{\Xhat_j}$. For every sample $k$, $\Xhat_{k,j}$ is sub-Gaussian with parameter $\sigma_X^2 \Sigma_{j,j}$ if $M_{k,j} = 1$, or with parameter $\sigma_X^2 \Sigma_{\Pi(j),\Pi(j)} \tauhat_j^2$ otherwise. 
In particular, the latter is bounded by $\frac{9}{4} \sigma_X^2 \Sigma_{\Pi(j),\Pi(j)} \tau_j^2$ with probability at least 
$1 - O(\exp(-n))$
using Lemma \ref{lemma:tau}. 
Put together, $\Xhat_{k,j}$ is sub-Gaussian with parameter at most $g_k(j)^2$.
Since samples are independently generated across all $k$'s, we know that $\Xhat_j$ is a sub-Gaussian vector with parameter at most $\gmax(j)^2 := \max_{k\in n} g_k(j)^2$.
Then, by Lemma \ref{lemma:hsu}, for all $t > 0$ we have 
\begin{align*}
\Prob{\normsq{\Xhat_j}  >  \gmax(j)^2 (n + 2\sqrt{nt} + 2t)}       \leq    \econst^{-t}    \,.
\end{align*}
Setting $t = n$, this leads to 
\begin{align*}
\Prob{\norm{\Xhat_j}  >  \gmax(j) \sqrt{5n}  }       \leq    \econst^{-n}    \,.
\end{align*}

Combining both parts above, with probability at least $1-O(\exp(-n))$, we require that
\begin{align*}
\abs{\za_j}
&\leq \frac{1}{\lambda n} \norm{\epsilon - \Delta_S w_S^\ast} \cdot \norm{\Xhat_j} \\
&\leq \frac{1}{\lambda n} \hmax \sqrt{5n} \cdot \gmax(j) \sqrt{5n} \\
&= \frac{5\hmax \gmax(j)}{\lambda} \,.
\end{align*}
Our goal is to ensure that $\abs{\za_j}$ is less than $\gamma / 4$ for all $j \in \Scomp$. Thus, the high probability sufficient condition is
\[
\lambda > 20 \hmax \cdot \gmax(j) / \gamma \,.
\]
Taking a union bound for all $j\in \Scomp$ leads to the final result.
\end{proof}


\section{Proof of Theorem \ref{thm:zb}}
\begin{proof}
Note that 
\begin{align*}
\norminf{\zb}
&= \norminf{\Xhat_{\Scomp}^\top \Xhat_S (\Xhat_S^\top \Xhat_S)^{-1} z_S} \\
&\leq \mnorminf{\Xhat_{\Scomp}^\top \Xhat_S (\Xhat_S^\top \Xhat_S)^{-1}} \norminf{z_S} \\
&\leq \mnorminf{\Xhat_{\Scomp}^\top \Xhat_S (\Xhat_S^\top \Xhat_S)^{-1}} \\
&= \mnorminf{\Hhat_{\Scomp, S} \Hhat_{S,S}^{-1}} \\
&\leq  \mnorminf{H_{\Scomp, S} H_{S,S}^{-1}} \\
          & \quad+ \mnorminf{(\Hhat_{\Scomp, S} - H_{\Scomp, S}) H_{S,S}^{-1}} \\
          & \quad+ \mnorminf{H_{\Scomp, S} (\Hhat_{S,S}^{-1} - H_{S,S}^{-1})} \\
          & \quad+ \mnorminf{(\Hhat_{\Scomp, S} - H_{\Scomp, S}) (\Hhat_{S,S}^{-1} - H_{S,S}^{-1})} \,. 
\end{align*}
We use the shorthand notation to denote the last four terms above, where $HH_1 := H_{\Scomp, S} H_{S,S}^{-1}$, $HH_2 := (\Hhat_{\Scomp, S} - H_{\Scomp, S}) H_{S,S}^{-1}$, $HH_3 := H_{\Scomp, S} (\Hhat_{S,S}^{-1} - H_{S,S}^{-1})$, and $HH_4 := (\Hhat_{\Scomp, S} - H_{\Scomp, S}) (\Hhat_{S,S}^{-1} - H_{S,S}^{-1})$, respectively.

Next we bound $\mnorminf{HH_1}$ through $\mnorminf{HH4}$. 
Regarding $HH_1$, by Lemma \ref{lemma:H_mutual_incoherence}, with probability at least 
$1 
- O\left(s(p-s) \exp\left(-\frac{\beta^2 \gamma^2 n}{s^3}\right)\right)
- O\left(s^2 \exp\left(-\frac{\beta^2 \gamma^2 n}{s^3 (1-\gamma)^2 }\right)\right)$,  
we have $\mnorminf{HH_1} \leq 1-\gamma /2$.

For $HH_2$, using Lemma \ref{lemma:partB_auxillary}, with probability at least $1 - O\left(s^2 \exp\left(- \frac{\beta^2 \gamma^2 n}{s^3 (1-\gamma/2)^2}\right)\right)$, we have
\begin{align*}
\mnorminf{HH_2}
&\leq \mnorminf{\Hhat_{\Scomp, S} - H_{\Scomp, S}} \mnorminf{H_{S,S}^{-1}} \\
&\leq \sqrt{s} \mnorminf{\Hhat_{\Scomp, S} - H_{\Scomp, S}} \mnorm{H_{S,S}^{-1}} \\
&\leq \frac{2\sqrt{s}}{\beta} \mnorminf{\Hhat_{\Scomp, S} - H_{\Scomp, S}} \\
&\leq \frac{\gamma}{12} \,.
\end{align*}
Similarly for $HH_3$, with probability of the same order we have 
\begin{align*}
\mnorminf{HH_3}
&\leq \mnorminf{H_{\Scomp, S} (\Hhat_{S,S}^{-1} - H_{S,S}^{-1})} \\
&\leq \mnorminf{H_{\Scomp, S} (H_{S,S}^{-1} (H_{S,S} - \Hhat_{S,S}) \Hhat_{S,S}^{-1})} \\
&\leq \mnorminf{H_{\Scomp, S} H_{S,S}^{-1}} \mnorminf{H_{S,S} - \Hhat_{S,S}} \mnorminf{\Hhat_{S,S}^{-1}} \\
&\leq \sqrt{s} (1-\gamma/2)  \mnorminf{H_{S,S} - \Hhat_{S,S}} \mnorm{\Hhat_{S,S}^{-1}} \\
&\leq \frac{4\sqrt{s}(1-\gamma/2)}{\beta} \mnorminf{H_{S,S} - \Hhat_{S,S}} \\
&\leq \frac{\gamma}{12} \,.
\end{align*}
For $HH_4$, with probability of the same order we have 
\begin{align*}
&\quad \mnorminf{HH_4}\\
&\leq \mnorminf{\Hhat_{\Scomp, S} - H_{\Scomp, S}} \mnorminf{\Hhat_{S,S}^{-1} - H_{S,S}^{-1}} \\
&= \mnorminf{\Hhat_{\Scomp, S} - H_{\Scomp, S}} \mnorminf{\Hhat_{S,S}^{-1} (H_{S,S} - \Hhat_{S,S}) H_{S,S}^{-1}} \\
&\leq \frac{8s}{\beta^2} \mnorminf{\Hhat_{\Scomp, S} - H_{\Scomp, S}}  \mnorminf{H_{S,S} - \Hhat_{S,S}} \\
&\leq \frac{\gamma}{12} \,,
\end{align*}
where the last inequality holds if $\gamma^2 \leq 12 \gamma (1-\gamma / 2)$, which is always true since $\gamma$ is bounded between $0$ and $1$.

Combining all four terms above using a union bound, with probability at least 
$
1 
- O\left(s(p-s) \exp\left(-\frac{\beta^2 \gamma^2 n}{s^3}\right)\right)
- O\left(s^2 \exp\left(-\frac{\beta^2 \gamma^2 n}{s^3 (1-\gamma)^2 }\right)\right)
- O\left(s^2 \exp\left(-\frac{\beta^2 \gamma^2 n}{s^3 (1-\gamma/2)^2 }\right)\right)
$,
we have 
$\norminf{\zb} 
\leq 
1-\frac{\gamma}{2} + \frac{\gamma}{12} + \frac{\gamma}{12} + \frac{\gamma}{12}
= 
1 - \frac{\gamma}{4}$.
\end{proof}

\end{document}